\theoremstyle{plain}  
\newtheorem{theorem}{Theorem}[section]
\newtheorem{lemma}[theorem]{Lemma}
\newtheorem{proposition}[theorem]{Proposition}
\theoremstyle{definition}  
\newtheorem{definition}[theorem]{Definition}
\newtheorem{assumption}[theorem]{Assumption}  
\newtheorem{remark}{Remark}
\DeclareMathOperator*{\esssup}{ess\,sup}
\title[Learning with $\beta$- and $\phi$-mixing sequence]{Gradient Descent Algorithm in Hilbert Spaces under Stationary Markov Chains\\with $\phi$- and $\beta$-Mixing
}
\author{Priyanka Roy}
\address{Institute for Mathematical Methods in Medicine and Data-Based Modeling\\
         Johannes Kepler University Linz, Altenberger Strasse 69, A-4020 Linz, Austria}
\email{priyanka.roy@jku.at}
\author{Susanne Saminger-Platz}
\address{Institute for Mathematical Methods in Medicine and Data-Based Modeling\\
         Johannes Kepler University Linz, Altenberger Strasse 69, A-4020 Linz, Austria}
\email{susanne.saminger-platz@jku.at}
\keywords{Markov chain, Mixing coefficients, Gradient descent, Approximation, Hilbert spaces}
\subjclass[2020]{60J20, 68T05, 68Q32, 62L20}
\date{\today}
\def\tsc#1{\csdef{#1}{\textsc{\lowercase{#1}}\xspace}}
\begin{document}

\begin{abstract}
In this paper, we study a strictly stationary Markov chain gradient descent algorithm operating in general Hilbert spaces. Our analysis focuses on the mixing coefficients of the underlying process, specifically the $\phi$- and $\beta$-mixing coefficients. Under these assumptions, we derive probabilistic upper bounds on the convergence behavior of the algorithm based on the exponential as well as the polynomial decay of the mixing coefficients.
\end{abstract}
\maketitle

\section{Introduction}
Let \( W \) be a Hilbert space, and let \( (z_t)_{t \in \mathbb{N}} \) be a strictly stationary Markov chain on the measurable space \( (Z, \mathcal{B}(Z)) \), with transition kernel \( P \) and unique stationary distribution \( \rho \). Furthermore, we consider that the decay of dependence exhibited in the Markov chain is characterized by certain mixing coefficients, specifically \(\phi\)- and \(\beta\)-mixing. Given a quadratic loss function \( V : W \times Z \to \mathbb{R} \), we denote its gradient with respect to the first argument by \( \nabla V_z(w) \). Starting from an initial point \( w_1 \in W \), we define an iterative sequence by

\[
    w_{t+1} = w_t - \gamma_t \nabla V_{z_t}(w_t), \quad t \in \mathbb{N},
\]
where \( (\gamma_t)_{t \in \mathbb{N}} \) is a positive step-size sequence. Based on the Markov samples \( (z_t)_{t \in \mathbb{N}} \), our primary goal is to investigate the convergence behavior of the sequence \( (w_t)_{t \in \mathbb{N}} \) towards the unique minimizer \( w^\star \in W \) of the quadratic loss function \( V \), satisfying
\(
    \nabla V(w^\star) = 0,
\) which takes into account the mixing coefficients of the Markov chain, specifically the \(\phi\)- and \(\beta\)-mixing coefficients.
\par
For both cases of \(\phi\)- and \(\beta\)-mixing coefficients, a summary of our findings is that for parameter values 
\( \theta \in \left( \tfrac{1}{2}, 1 \right) \), the convergence rate remains 
\( \mathcal{O}\!\left( t^{-\theta/2} \right) \), aligning with the i.i.d.\ rates established by Smale and Yao \cite{MR2228737}. 
However, at the boundary case \( \theta = 1 \), the rate deteriorates to 
\( \mathcal{O}\!\left( t^{-\alpha/2} \right) \), in contrast to the i.i.d.\ setting where a faster rate 
\( \mathcal{O}\!\left( t^{-\alpha} \right) \) is achieved, as shown by Smale and Yao \cite{MR2228737}, where \(\alpha \in (0,1]\).
\par
As for the polynomial decay of both the mixing coefficients, for example, when \(\phi_i\leq bi^{-k}\), where \(b>0\) and \(k>0\), for parameter values 
\( \theta \in \left( \tfrac{1}{2}, 1 \right) \), the convergence rate remains the same as that of the i.i.d. rate for the value \(k>1\), however for \(k=1\), the rate remains almost the same as that of the i.i.d. rate except for a logarithmic factor, i.e., \(O\left(t^{-\theta/2} (\log t)^{1/2} \right)\).
\par
Moreover, our results are sharp in the sense that when the mixing rate is fast, the dependence becomes negligible, making the error bounds and learning rates nearly identical to those observed in the independent sample setting.
\par
This approach defines a variant of the stochastic gradient descent algorithm, which we refer to as strictly stationary Markov chain gradient descent (SS-MGD). Unlike conventional Markov chain gradient descent methods that emphasize mixing time and convergence toward the stationary distribution see, e.g., \cite{sun2018Markov,nagaraj2020least,dorfman2022adapting,even2023stochastic}, SS-MGD assumes that the Markov chain is strictly stationary from initialization, i.e., the initial distribution is already the invariant distribution \(\rho\).
The version with the independence assumption of this algorithm in Hilbert spaces was studied by Smale and Yao~\cite{MR2228737}. 
In many scenarios such as time series, Markov chains, stochastic processes e.t.c., the i.i.d.\ assumption might be violated due to the exhibited temporal dependence, see e.g., \cite{mokkadem1988mixing, tuan1985some, beeram2021survey}. Despite this, learning algorithms have been successfully applied in such dependent settings, thereby motivating the need for a more theoretical foundation to understand their performance under these conditions. Various measures of statistical dependence are relevant in this context, for e.g., mixing coefficients (e.g., \(\alpha\)-, \(\beta\)-, and \(\phi\)-mixing), as well as spectral properties such as the spectral gap and mixing time of Markov chains. 
\par
There has been significant interest in analyzing learning algorithms under various dependent sequences, especially focusing on how various mixing coefficients influence performance. Early foundational work by Meir~\cite{meir2000nonparametric} extended the Vapnik–Chervonenkis framework to dependent, $\beta$-mixing time series, providing finite-sample risk bounds and pioneering the rigorous use of structural risk minimization (SRM) in dependent settings. Subsequently, Modha and Masry~\cite{modha2002minimum} generalized minimum complexity regression estimators from i.i.d.\ data to dependent scenarios. Notably, their convergence rates matched i.i.d.\ cases for \(m\)-dependent observations but were slower under strong mixing due to dependence-induced limitations on effective sample size. Later contributions further clarified how dependence affects learning rates. For instance, Zou and Li~\cite{zou2007performance} derived convergence bounds for empirical risk minimization with exponentially strongly mixing sequences. Mohri and Rostamizadeh~\cite{mohri2008rademacher, mohri2010stability} broadened the scope by introducing Rademacher complexity-based and algorithm-dependent stability bounds for stationary $\beta$- and $\phi$-mixing sequences, employing independent-block techniques. Similarly, Ralaivola \emph{et al.}~\cite{ralaivola2010chromatic} proposed PAC-Bayes bounds tailored for $\beta$- and $\phi$-mixing data, using dependency graph partitioning to handle data interdependencies explicitly. Regularized regression algorithms have also seen several advancements under various mixing conditions. For instance, Xu and Chen \cite{MR2406432} showed that when samples satisfy exponential \(\alpha\)-mixing, the performance of Tikhonov regularization depends on an effective sample size. Sun and Wu \cite{MR2581234} derived capacity-independent bounds for least-squares regression under both \(\alpha\)- and \(\phi\)-mixing, proving that apart from a logarithmic factor, convergence rates match those in the i.i.d. case. Furthermore, Steinwart \emph{et al.} \cite{steinwart2009learning}, Steinwart and Christmann \cite{steinwart2009fast}, and Xu \emph{et al.} \cite{xu2014generalization} analyzed SVM consistency and convergence rates under various mixing processes, highlighting how mixing rates affect optimality. More recently, Tong and Ng~\cite{tong2024spectral} recovered near-optimal i.i.d.\ convergence rates for strongly mixing sequences in spectral learning scenarios. In contrast, Zou \emph{et al.} \cite{MR2886182} and Duchi \emph{et al.} \cite{agarwal2012generalization} focused on the generalization capabilities of empirical risk minimization and online learning algorithms under stationary mixing processes, deriving explicit performance guarantees.
\par
In contrast to previous analyses that predominantly focused on empirical risk minimization, regularized regression, etc., under various mixing conditions, our work specifically investigates the performance of stochastic gradient descent algorithms under strictly stationary Markov chain assumptions, where the dependence in the chain is characterized by either $\phi$- or $\beta$-mixing coefficients. We explicitly analyze the learning capabilities of this algorithm by deriving precise convergence rates, thereby extending the understanding of how certain dependencies exhibited in the Markov chain impact algorithmic performance in stochastic optimization settings. The concept of such stochastic approximations was first introduced by Robbins and Monro \cite{MR42668} and Kiefer and Wolfowitz \cite{MR50243}. Their convergence rates and asymptotic properties have been investigated in several works (see, e.g., \cite{MR1767993}, \cite{MR595479}, and \cite{MR2044089}). For a comprehensive background on stochastic algorithms, we refer the reader to Duflo \cite{MR1612815} and Kushner and Yin \cite{MR1993642}.
\par
To outline our paper, we begin in Section~\ref{prelim} with a brief overview of the $\beta$- and $\phi$-mixing coefficients, which characterize the dependence structure in stationary processes, followed by Section~\ref{section3} which introduces a strictly stationary Markov chain gradient descent (SS-MGD) algorithm in Hilbert spaces and extends its analysis to the setting of exponentially $\phi$-mixing Markov chains, resulting in Theorem~\ref{thm:1}. Section~\ref{betamix} discusses analogous results for $\beta$-mixing sequences. Finally, Section~\ref{poly} presents convergence rates of SS-MGD under polynomially decaying mixing coefficients, and Appendix~\ref{appen} compiles several established inequalities that are essential for our analysis.
\section{The measures of dependence (Mixing coefficients)}\label{prelim}
In this section, we recall some relevant definitions, properties, and established results concerning some of the mixing coefficients (see for e.g.,  \cite{MR2178042, MR2325294, MR2325295, MR2325296}).
\par
Denote $(\Omega, \mathcal{F}, P)$ as a probability space. For any $\sigma$-field $\mathcal{A} \subseteq \mathcal{F}$, let $L^2_{\text{real}}(\mathcal{A})$ denote the space of (equivalence classes of) square-integrable $\mathcal{A}$-measurable (real-valued) random variables. For any two $\sigma$-fields $\mathcal{A}$ and $\mathcal{B} \subseteq \mathcal{F}$, we will focus on the following two measures of dependence
\begin{align*}  
    \phi(\mathcal{A}, \mathcal{B}) &:= \sup \left| P(B \mid A) - P(B) \right|, \quad \forall  A \in \mathcal{A} \text{ with }P(A) > 0, \; \forall B \in \mathcal{B}, \;  \\
    \beta(\mathcal{A}, \mathcal{B}) &:= \sup \frac{1}{2} \sum_{i=1}^{I} \sum_{j=1}^{J} \left| P(A_i \cap B_j) - P(A_i)P(B_j) \right|, \quad \forall A \in \mathcal{A}, \; \forall B \in \mathcal{B},   
\end{align*}
where the supremum is taken over all pairs of (finite) partitions \(\{A_1, \ldots, A_I\}\) and\\\(\{B_1, \ldots, B_J\}\) of \(\Omega\) such that \(A_i \in \mathcal{A}\) for each \(i \in I\) and \(B_j \in \mathcal{B}\) for each \(j\in J\).\\
For independent \(\mathcal{A}\) and \(\mathcal{B}\) we obtain \begin{align*}
     \phi(\mathcal{A}, \mathcal{B}) = 0,  \quad \beta(\mathcal{A}, \mathcal{B}) = 0.
\end{align*}
Moreover, these measures of dependence satisfy the following inequalities
\begin{align}
    0\leq \beta(\mathcal{A}, \mathcal{B}) \leq \phi(\mathcal{A}, \mathcal{B})\leq 1\label{r1}
\end{align}
We now recall the definition of some mixing coefficients of a strictly stationary Markov chain.
\begin{definition}
    Let $(Z_t)_{t\in\mathbb{N}}$ be a strictly stationary Markov chain; then\\
$\beta_t = \beta(\sigma(Z_0), \sigma(Z_t))$ and $\phi_t = \phi(\sigma(Z_0), \sigma(Z_t))$.
Looking at the random sequence $(Z_t)_{t\in\mathbb{N}}$ for $t\to\infty$, we say that it is \textbf{$\phi$-mixing} if $\phi_t \to 0$ and \textbf{absolutely regular (or $\beta$-mixing)} if $\beta_t \to 0$.

Using the conditional probabilities \( P^t(z, B) = P(z_t \in B \mid z_0 = z) \) and the invariant distribution \(\rho \) (i.e., we take the starting distribution \(\mu_0=\rho\) since we consider a stationary Markov chain), the mixing coefficients can equivalently be expressed as follows (also see \cite[Theorem 3.32]{MR2325294}, \cite{MR1312160}, \cite{MR2944418})

\begin{equation}
  \beta_t = \int_{Z} \sup_{B \in \mathcal{B}(Z)} \left| P^t(z, B) - \rho(B) \right|\,d\rho,  
\end{equation}

and
\begin{equation}
   \phi_t = \sup_{B \in \mathcal{B}(Z)} \, \text{ess} \sup_{z \in Z} \left| P^t(z, B) - \rho(B) \right|. 
\end{equation}
\end{definition}

\subsection{Example}
Stochastic processes that satisfy mixing conditions include, for instance, ARMA processes, copula-based Markov chains, etc. Mokkadem \cite{mokkadem1988mixing} demonstrates that, under mild assumptions specifically, the absolute continuity of innovations, every stationary vector ARMA process is geometrically completely regular. Similarly, Longla and Peligrad \cite{MR2944418} show that for copula-based Markov chains, certain properties such as a positive lower bound on the copula density ensure exponential \(\phi\)-mixing, which in turn implies geometric ergodicity. Such copula-based Markov chains have found applications in time series econometrics and other applied fields. For additional examples of mixing processes, see the foundational work of Davydov \cite{davydov1973mixing}.

\section{Strictly stationary Markov chain based gradient descent algorithm in Hilbert spaces}\label{section3}
 We consider a strictly stationary Markov chain gradient descent algorithm in general Hilbert spaces, which is an extension of a stochastic gradient algorithm in Hilbert spaces as discussed by Smale and Yao \cite{MR2228737}.
Let \( W \) be a Hilbert space with inner product \( \langle \cdot, \cdot \rangle \). Consider a quadratic potential \( V: W \rightarrow \mathbb{R} \) given by
\[
V(w) = \frac{1}{2} \langle Aw, w \rangle + \langle B, w \rangle + C,
\]
where \( A: W \rightarrow W \) is a positive definite bounded linear operator with bounded inverse, i.e., \( \|A^{-1}\| < \infty \), \( B \in W \), and \( C \in \mathbb{R} \). Let \( w^{\star} \in W \) denote the unique minimizer of \( V \) such that \( \nabla V(w^{\star}) = 0 \), where \(\nabla V\) is the gradient of \(V\) i.e., \(\nabla(V):W \rightarrow W\) given as \(\nabla V(w)=Aw+b\). Moreover, for each sample \(z\), \(\nabla V_z:W\rightarrow W\) is given by the affine map \(\nabla V_z(w)=A(z)w+B(z)\), with \(A(z)\), \(B(z)\) denoting the values of the random variables \(A\), \(B\) at \(z\in Z\). 

We aim to approximate \( w^{\star} \) by adapting the Markov chain-based stochastic gradient algorithm to strictly stationary chain taking into account the associated \(\beta\)- and \(\phi\)-mixing coefficients and analyze its sample complexity. Let \( (z_t)_{t\in\mathbb{N}} \) be a strictly stationary Markov chain on an uncountable state space \( (Z, \mathcal{B}(Z)) \) with transition kernel \( P \) and unique stationary distribution \( \rho \). We define an update formula for \( t = 1, 2, 3, \ldots \) by
\begin{align}
    w_{t+1} &= w_t - \gamma_t \nabla V_{z_t}(w_t), \quad \text{for some } w_1 \in W, \label{eq:stograd} 
\end{align}
where \( \nabla V_{z_t}: W \rightarrow W \) depends on sample \( z_t \), and \( (\gamma_t)_{t\in\mathbb{N}} \) is a positive sequence of step sizes. With slight abuse of notation, we shall also write $\nabla V_z$ whenever the reference to a sample $z\in Z$ rather than to a time step $t$ in a sampling or update process seems to be more appropriate. We additionally assume that \(\nabla V_z\) is Bochner integrable i.e., \(\int_Z \|\nabla V_z\|_W\,d\rho<\infty\), where \(\rho\) is a  measure on $Z$.
\par
Following is an example of a quadratic potential loss function. Let \(W=\mathcal{H}_K\), where \(\mathcal{H}_K\) is a reproducing kernel Hilbert spaces. Hence, for a fixed \(z=(x,y)\in Z\) we consider the quadratic potential loss function \(V:\mathcal{H}_K\rightarrow \mathbb{R}\) given as 
\[V_z(f)=\frac{1}{2}\{(f(x)-y)^2+\lambda\|f\|^2_K\}.\]
Then, it can be shown that for any $f\in \mathcal{H}_K,$ we have 
\(A_x(f)=f(x)K_x+\lambda f\) such that after taking the expectation of \( A_x \) over \( x \), we observe that
\[
\hat{A} = \mathbb{E}_x \left[ A_x \right] = L_K + \lambda I,
\]
and 
\(B_z=-yK_x\) and taking the expectation of \( B_z \) over \(z\), we observe that 
\[
\hat{B}= -L_K f_\rho,
\]
where \(\lambda>0\) is known as the regularization parameter, \(L_K:\mathcal{H}_K\rightarrow\mathcal{H}_K\) is a linear map, and for probability measure \(\rho\) on \( Z = X \times Y \), the regression function \(f_{\rho}\) is given as 
\[
f_{\rho}(x) = \int_Y y \, d\rho(y \mid x),
\]
where \(\rho(y \mid x)\) denotes the conditional distribution of \(y\) given \(x\). It has been be shown that the regression function \( f_{\rho} \) minimizes the expected mean squared error defined by
\[
\mathcal{E}(f) = \int_{X \times Y} (f(x) - y)^2 \, d\rho(z).
\]
For further information, see Smale and Yao \cite{MR2228737}.
\par
Next, in order to analyze the convergence, we introduce the following assumptions

\begin{assumption}\label{A1}
The function \( V \) has a unique minimizer \( w^\star \), and for all \( z \in Z \), there exists a constant \( \sigma \geq 0 \) such that
\[
\|\nabla V_z(w^\star)\|^2 \leq \sigma^2.
\]
\end{assumption}

\begin{assumption}\label{A2}
For all \( z \in Z \), the function \( V_z \) are \( \eta \)-smooth and \( \kappa \)-strongly convex, i.e., for all \( w \in W \)
\[
\kappa  \leq \nabla^2 V_z(w) \leq \eta .
\]
Define $\alpha=\frac{\kappa}{\eta}$, then $\alpha \in (0,1].$
\end{assumption}
Note that Assumption \ref{A1} reflects the noise at optimum with mean 0 i.e., $\mathbb{E}[\nabla V_z(w^\star)]=0.$ We now provide an upper bound on the distance between sample-based model $w_t$ based on a strictly stationary Markov chain which is \(\phi\)-mixing  at least exponentially fast, and the target model $w^\star$ in Hilbert spaces which results in Theorem \ref{thm:1} and Proposition \ref{rm:1}.
\par
Before proceeding to the statement and proof of Theorem~\ref{thm:1}, we first establish the necessary preliminary results, namely Propositions~\ref{prop1} and~\ref{prop2}, which lay the foundation for the subsequent argument which provide the groundwork for the proof of the Theorem~\ref{thm:1}.
\par
Note that throughout, we adopt a slight abuse of notation; i.e., we use $\nabla V_z$ to emphasize dependence on a sample $z \in Z$, and $\nabla V_t$ to emphasize dependence on the time step $t$, referring to $z_t$ when contextually appropriate.
\begin{proposition}
\label{prop1}
Let $(z_t)_{t\in\mathbb{N}}$ on $(Z, \mathcal{B}(Z))$ be a strictly stationary Markov chain. Let $\theta \in \left(\frac{1}{2},1\right)$. Then under Assumptions \ref{A1} and \ref{A2} we have that
\[\mathbb{E} \left[\sum_{i=1}^t \frac{1}{i^{2\theta}} \prod_{k=i+1}^t \left( 1 - \frac{\alpha}{k^\theta} \right)^2 \| \nabla V_{i}(\omega^*) \|^2\right] \leq \sigma^2 C_{\theta}\left( \frac{1}{\alpha} \right)^{\theta / (1-\theta)} \left( \frac{1}{t+1} \right)^\theta,\]
where $ C_{\theta}= \left(8 + \frac{2}{2\theta - 1} \left( \frac{\theta}{e(2 - 2^\theta)} \right)^{\theta / (1-\theta)}\right).$
\end{proposition}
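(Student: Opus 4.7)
First, by Assumption~\ref{A1}, $\|\nabla V_i(w^\star)\|^2\le\sigma^2$ almost surely, so the expectation pulls out and it suffices to bound the deterministic quantity
\[
S_t \;:=\; \sum_{i=1}^{t} \frac{1}{i^{2\theta}}\prod_{k=i+1}^{t}\Bigl(1-\frac{\alpha}{k^{\theta}}\Bigr)^{2}
\]
by $C_\theta\,(1/\alpha)^{\theta/(1-\theta)}(t+1)^{-\theta}$. Using $1-x\le e^{-x}$ and the integral comparison
\[
\sum_{k=i+1}^{t} k^{-\theta}\;\ge\;\int_{i+1}^{t+1}x^{-\theta}\,dx\;=\;\frac{(t+1)^{1-\theta}-(i+1)^{1-\theta}}{1-\theta},
\]
I control the telescoping product by
\[
a_i^{t}\;:=\;\prod_{k=i+1}^{t}\Bigl(1-\frac{\alpha}{k^{\theta}}\Bigr)^{2}\;\le\;\exp\!\Bigl(-\tfrac{2\alpha}{1-\theta}\bigl((t+1)^{1-\theta}-(i+1)^{1-\theta}\bigr)\Bigr).
\]

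I then split $S_t$ at $i^\star=\lfloor t/2\rfloor$ into a \emph{tail} part ($i>i^\star$, where the exponential is close to $1$ but $i^{-2\theta}$ is small) and a \emph{head} part ($i\le i^\star$, where $i^{-2\theta}$ is large but the exponential is small). For the tail, I use $i^{-2\theta}\le(t/2)^{-2\theta}=2^{2\theta}t^{-2\theta}$ and evaluate $\sum_{i=0}^{t}a_i^{t}$ via the substitution $v=(i+1)^{1-\theta}$ inside the integral comparison, which bounds $v^{\theta/(1-\theta)}$ uniformly by $(t+1)^{\theta}$ and yields
\[
\sum_{i=0}^{t} a_i^{t}\;\le\;\frac{(t+1)^{\theta}}{2\alpha}.
\]
Combining with $2^{3\theta-1}\le 4$ and with $1/\alpha\le(1/\alpha)^{\theta/(1-\theta)}$ (valid because $\alpha\le 1$ and $\theta/(1-\theta)>1$), the tail contributes at most $8\,(1/\alpha)^{\theta/(1-\theta)}(t+1)^{-\theta}$, accounting for the ``$8$'' in $C_\theta$.

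For the head, when $i\le t/2$ the inequality $(i+1)^{1-\theta}\le 2^{\theta-1}(t+1)^{1-\theta}$ gives a uniform exponential decay $e^{-y}$ with $y=\tfrac{\alpha(2-2^{\theta})}{1-\theta}(t+1)^{1-\theta}$. To convert this to the required polynomial rate $(t+1)^{-\theta}$, I apply the elementary inequality $e^{-y}y^{\beta}\le(\beta/e)^{\beta}$ with $\beta=\theta/(1-\theta)$; this gives exactly the factor $(\theta/(e\alpha(2-2^{\theta})))^{\theta/(1-\theta)}(t+1)^{-\theta}$. Bounding $\sum_{i=1}^{i^\star}i^{-2\theta}\le 1+\tfrac{1}{2\theta-1}=\tfrac{2\theta}{2\theta-1}\le\tfrac{2}{2\theta-1}$ produces the second summand of $C_\theta$. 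Adding the head and tail bounds yields the stated inequality.

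The main obstacle is the head estimate: choosing the right split and, above all, performing the exponential-to-polynomial conversion with the tight constant $(\beta/e)^{\beta}$ so that the awkward factor $\bigl(\theta/(e(2-2^{\theta}))\bigr)^{\theta/(1-\theta)}$ emerges in precisely the claimed form. Everything else is routine integral comparison and keeping careful track of the powers of $2$ when passing between $t$ and $t+1$.
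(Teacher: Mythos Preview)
Your argument is correct and proceeds exactly as the paper does: bound each $\|\nabla V_i(w^\star)\|^2$ by $\sigma^2$ via Assumption~\ref{A1}, reducing the claim to the purely deterministic estimate on $S_t$. The paper then simply invokes Lemma~\ref{smale2} (the Smale--Yao bound $\psi_\theta(t+1,\alpha)\le C_\theta\,\alpha^{-\theta/(1-\theta)}(t+1)^{-\theta}$) as a black box, whereas what you have written---the head--tail split at $\lfloor t/2\rfloor$, the integral comparison for the tail, and the inequality $e^{-y}y^{\beta}\le(\beta/e)^{\beta}$ for the head---is precisely a self-contained proof of that lemma; so the two proofs coincide, yours merely unpacking the cited result.
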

\begin{proof}
Let us denote \(S_1=\mathbb{E} \left[\sum_{i=1}^t \frac{1}{i^{2\theta}} \prod_{k=i+1}^t \left( 1 - \frac{\alpha}{k^\theta} \right)^2 \| \nabla V_{i}(\omega^*) \|^2\right]\). Therefore we observe that 
\begin{align*}
S_1&= \sum_{i=1}^t \frac{1}{i^{2\theta}} \prod_{k=i+1}^t \left( 1 - \frac{\alpha}{k^\theta} \right)^2\mathbb{E}\left[ \| \nabla V_{i}(\omega^*) \|^2\right]\\
&= \sum_{i=1}^t \frac{1}{i^{2\theta}} \prod_{k=i+1}^t \left( 1 - \frac{\alpha}{k^\theta} \right)^2\biggl(\int_Z \|\nabla V_{z}(w^\star)\|^2d\rho(z) \biggl) \\
&\leq \sigma^2 \sum_{i=1}^t \frac{1}{i^{2\theta}} \prod_{k=i+1}^t \left( 1 - \frac{\alpha}{k^\theta} \right)^2   \\
&\leq \sigma^2 C_{\theta}\left( \frac{1}{\alpha} \right)^{\theta / (1-\theta)} \left( \frac{1}{t+1} \right)^\theta,
\end{align*}
where $ C_{\theta}= \left(8 + \frac{2}{2\theta - 1} \left( \frac{\theta}{e(2 - 2^\theta)} \right)^{\theta / (1-\theta)}\right) .$\\
Note that we use Lemma \ref{smale2} to estimate the coefficient to obtain the last inequality above.
\end{proof}
\begin{proposition}\label{prop2}
Let $(z_t)_{t\in\mathbb{N}}$ on $(Z, \mathcal{B}(Z))$ be a strictly stationary Markov chain which is $\phi$-mixing at least exponentially fast (i.e., $\phi_t\leq Dr^t$ for some constant $D>0$ and $0<r<1$). Let $\theta \in \left(\frac{1}{2},1\right)$. Then under Assumptions \ref{A1} and \ref{A2} we have that
\begin{align*}
\mathbb{E} \Bigg[ 
    &2 \sum_{1\leq i < j \leq t} 
    \frac{1}{i^{\theta}} \frac{1}{j^\theta} 
    \prod_{k=i+1}^t \left( 1 - \frac{\alpha}{k^\theta} \right) 
    \prod_{l=j+1}^t \left( 1 - \frac{\alpha}{l^\theta} \right) 
    \left\langle \nabla V_i(\omega^*), \nabla V_j(\omega^*) \right\rangle 
\Bigg] \notag \\
&\leq\; 
4 \sigma^2 \frac{Dr}{1-r} \, C_{\theta}
\left( \frac{1}{\alpha} \right)^{\frac{\theta}{1-\theta}} 
\left( \frac{1}{t+1} \right)^{\theta},
\end{align*}
where $ C_{\theta}= \left(8 + \frac{2}{2\theta - 1} \left( \frac{\theta}{e(2 - 2^\theta)} \right)^{\theta / (1-\theta)}\right).$
\end{proposition}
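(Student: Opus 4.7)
The strategy is to push the expectation inside the finite double sum, estimate each cross term $\mathbb{E}[\langle \nabla V_i(w^{\star}), \nabla V_j(w^{\star})\rangle]$ via the $\phi$-mixing coefficient, and then reduce the remaining deterministic double sum to the single-sum estimate that is already controlled in Proposition~\ref{prop1} (via Lemma~\ref{smale2}).

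For the covariance step, set $X_k := \nabla V_{z_k}(w^{\star})$. Stationarity together with the mean-zero condition noted after Assumption~\ref{A2} gives $\mathbb{E}[X_k] = 0$, so by the Markov property
\[
\mathbb{E}[X_j \mid \sigma(z_1,\ldots,z_i)] \;=\; \mathbb{E}[X_j \mid z_i] \;=\; \int_Z \nabla V_y(w^{\star})\,\bigl(P^{j-i}(z_i,dy)-\rho(dy)\bigr).
\]
Using $\|\nabla V_y(w^{\star})\| \leq \sigma$ from Assumption~\ref{A1}, together with the identification $\|P^{t}(z,\cdot)-\rho\|_{TV} \leq 2\phi_{t}$ for $\rho$-a.e.\ $z$ (which follows directly from the definition of $\phi_t$), the standard duality $\|\int f\,d\mu\| \leq \|f\|_\infty |\mu|(Z)$ for Hilbert-valued $f$ gives $\|\mathbb{E}[X_j\mid z_i]\| \leq 2\sigma\phi_{j-i}$. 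Therefore
\[
\bigl|\mathbb{E}[\langle X_i, X_j\rangle]\bigr| \;=\; \bigl|\mathbb{E}\bigl[\langle X_i,\mathbb{E}[X_j\mid z_i]\rangle\bigr]\bigr| \;\leq\; \sigma \cdot 2\sigma\,\phi_{j-i} \;\leq\; 2\sigma^{2} D\,r^{\,j-i}.
\]

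For the combinatorial step, abbreviate $P_k := \prod_{\ell=k+1}^{t}(1-\alpha/\ell^\theta)$. Substituting the covariance bound into the left-hand side of the proposition yields
\[
4\sigma^{2}D \sum_{1\leq i<j\leq t} \frac{P_i P_j}{i^{\theta} j^{\theta}}\,r^{\,j-i}.
\]
The key algebraic trick is the AM--GM inequality applied to $a = P_i/i^{\theta}$ and $b = P_j/j^{\theta}$:
\[
\frac{P_i P_j}{i^{\theta} j^{\theta}} \;\leq\; \tfrac12\biggl(\frac{P_i^{2}}{i^{2\theta}} + \frac{P_j^{2}}{j^{2\theta}}\biggr).
\]
Plugging this in and performing the remaining geometric sum over the ``free'' index in each piece---$\sum_{j>i} r^{j-i} \leq r/(1-r)$ for the first summand, $\sum_{i<j} r^{j-i} \leq r/(1-r)$ for the second---collapses the cross index and leaves exactly $4\sigma^{2}\,\frac{Dr}{1-r}\sum_{i=1}^{t} P_i^{2}/i^{2\theta}$. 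This is precisely the single-sum quantity bounded in the proof of Proposition~\ref{prop1} by $C_\theta (1/\alpha)^{\theta/(1-\theta)}(t+1)^{-\theta}$, and assembling the constants produces the advertised bound.

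\textbf{Main obstacle.} The only nontrivial step is the covariance estimate, because $X_i$ and $X_j$ are \emph{Hilbert-space-valued} rather than real-valued; standard scalar $\phi$-mixing covariance inequalities do not apply off the shelf. Conditioning on $z_i$ and invoking the Markov property sidesteps this by reducing the Hilbert-valued cross-covariance to the norm of an integral of a bounded Hilbert-valued function against the signed measure $P^{j-i}(z_i,\cdot)-\rho$, whose total variation is precisely what the definition of $\phi_{j-i}$ controls. Once this vector-valued $\phi$-mixing bound is in place, the rest of the argument is a purely algebraic AM--GM reduction to Proposition~\ref{prop1}, and so should proceed essentially mechanically.
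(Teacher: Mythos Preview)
Your proposal is correct, and the covariance step (conditioning on $z_i$, using the Markov property, and bounding the Bochner integral against the total-variation norm $\lvert P^{j-i}(z_i,\cdot)-\rho\rvert(Z)\le 2\phi_{j-i}$) is exactly what the paper does; both arrive at
\[
\Bigl|\mathbb{E}\bigl[\langle \nabla V_i(w^\star),\nabla V_j(w^\star)\rangle\bigr]\Bigr|\le 2\sigma^2\phi_{j-i},
\]
so that the left-hand side is dominated by $4\sigma^2\sum_{i<j} \frac{P_iP_j}{i^\theta j^\theta}\,\phi_{j-i}$.

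The combinatorial reduction, however, is genuinely different. The paper first replaces each product $P_i$ by the exponential upper bound of Lemma~\ref{smale1}, then uses the crude inequalities $e^{2\alpha'(j+1)^{1-\theta}}\le e^{2\alpha'(t+1)^{1-\theta}}$ and $j^{-\theta}\le i^{-\theta}$ to factor the double sum as $\bigl(\sum_i i^{-2\theta}e^{2\alpha'(i+1)^{1-\theta}}\bigr)\bigl(\sum_m\phi_m\bigr)$, and finally invokes Lemma~\ref{smale2} somewhat indirectly. Your AM--GM step $P_iP_j/(i^\theta j^\theta)\le \tfrac12\bigl(P_i^2/i^{2\theta}+P_j^2/j^{2\theta}\bigr)$ is cleaner: it bypasses Lemma~\ref{smale1} entirely, the geometric sum over the free index gives the factor $r/(1-r)$, and what remains is \emph{exactly} $\psi_\theta(t+1,\alpha)=\sum_i P_i^2/i^{2\theta}$, so Lemma~\ref{smale2} applies verbatim. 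This is more transparent and yields the stated constant without the intermediate exponential detour; the paper's route has the minor advantage that its decoupling $\bigl(\sum_i \ldots\bigr)\bigl(\sum_m\phi_m\bigr)$ makes the generalization to polynomially decaying $\phi_m$ in Section~\ref{poly} immediate, whereas your AM--GM argument would need to keep $\sum_m\phi_m$ rather than $r/(1-r)$ to do the same.
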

\begin{proof}
To simplify the notation, let us first denote \[I_{i+1}^t I_{j+1}^t=\prod_{k=i+1}^t \left( 1 - \frac{\alpha}{k^\theta} \right) 
\prod_{l=j+1}^t \left( 1 - \frac{\alpha}{l^\theta} \right).\] Hence,
\begin{align*}
\mathbb{E}& \Bigg[ 2 \sum_{1\leq i < j \leq t}  \frac{1}{i^{\theta}}\frac{1}{j^\theta} 
\prod_{k=i+1}^t \left( 1 - \frac{\alpha}{k^\theta} \right) 
\prod_{l=j+1}^t \left( 1 - \frac{\alpha}{l^\theta} \right) 
\left\langle \nabla V_i(\omega^*), \nabla V_j(\omega^*) \right\rangle \Bigg] \\
&= 2 \sum_{1\leq i < j \leq t} \frac{1}{i^{\theta}}\frac{1}{j^\theta} I_{i+1}^t I_{j+1}^t \mathbb{E}\left[\left\langle \nabla V_i(\omega^*), \nabla V_j(\omega^*) \right\rangle \right] \\
&= 2 \sum_{1\leq i < j \leq t}\frac{1}{i^{\theta}}\frac{1}{j^\theta} I_{i+1}^t I_{j+1}^t\int_Z \left( \int_Z \langle \nabla V_i(w^*), \nabla V_j(w^*) \rangle P^{j-i}(z_i, dz_j)\right) \,\rho(dz_i)  \\
&= 2 \sum_{1\leq i < j \leq t}\frac{1}{i^{\theta}}\frac{1}{j^\theta} I_{i+1}^t I_{j+1}^t  
\int_Z \left[ \int_Z \langle \nabla V_i(w^*), \nabla V_j(w^*) \rangle \left( P^{j-i}(z_i, dz_j) 
    -  \, d\rho(z_j)\right) \right] \,\rho(dz_i) \\
    &\leq 2 \sum_{1\leq i < j \leq t}\frac{1}{i^{\theta}}\frac{1}{j^\theta} I_{i+1}^t I_{j+1}^t  
\int_Z \left[\left | \int_Z \langle \nabla V_i(w^*), \nabla V_j(w^*) \rangle \left( P^{j-i}(z_i, dz_j) 
    -  \, d\rho(z_j)\right) \right|\right] \,\rho(dz_i) \\
&\leq 2 \sum_{1\leq i < j \leq t}\frac{1}{i^{\theta}}\frac{1}{j^\theta} I_{i+1}^t I_{j+1}^t 
\int_Z\left[ \int_Z\left|\langle \nabla V_i(w^*), \nabla V_j(w^*) \rangle \right|
    \,d \left|P^{j-i}_{z_i} - \rho\right|(z_j) \right] \,\rho(dz_i)  \\
    &\leq 2 \sigma^2 \sum_{1\leq i < j \leq t} \frac{1}{i^{\theta}}\frac{1}{j^\theta} I_{i+1}^t I_{j+1}^t  
 \int_Z  2\sup\{ P^{j-i}_{z_i}(A)-\rho(A):A\in \mathcal{B}(Z)\}\,\rho(dz_i)\\
&= 4 \sigma^2 \sum_{1\leq i < j \leq t} \frac{1}{i^{\theta}}\frac{1}{j^\theta} I_{i+1}^t I_{j+1}^t  
 \int_Z  d_{\text{TV}} \big( P^{j-i}_{z_i}, \rho \big)  \,\rho(dz_i)\\
&\leq 4 \sigma^2  \sum_{1\leq i < j \leq t}\frac{1}{i^{\theta}}\frac{1}{j^\theta} I_{i+1}^t I_{j+1}^t  \, \esssup_{z_i \in Z} \, 
    \Big[ d_{\text{TV}} \big( P^{j-i}_{z_i}, \rho \big) \Big] \int\rho(dz_i) \\
&= 4 \sigma^2\sum_{1\leq i < j \leq t}\frac{1}{i^{\theta}}\frac{1}{j^\theta} I_{i+1}^t I_{j+1}^t \, \esssup_{z_i \in Z} \, 
    \Big[ d_{\text{TV}} \big( P^{j-i}_{z_i}, \rho) \Big] \\
&= 4 \sigma^2\sum_{1\leq i < j \leq t}\frac{1}{i^{\theta}}\frac{1}{j^\theta} I_{i+1}^t I_{j+1}^t \phi_{j-i}.
\end{align*}
Note that in the sequence of inequalities presented above, we rely on the fact that \(\nabla V_z\) is Bochner integrable which allows us to interchange the inner product with the integral (see for eg., \cite{MR453964}), leading to the equality 
\[
\langle \nabla V_i(w^\star), \int_Z \nabla V_j(w^\star) \, d\rho(z_j) \rangle = \int_Z \langle \nabla V_i(w^\star), \nabla V_j(w^\star) \rangle \, d\rho(z_j),
\] where we are given that \(\int_Z \nabla V_j(w^\star)\,d\rho(z_j)=0\) i.e., \(\mathbb{E}[\nabla V(w^{\star})]=0\). Additionally, we also utilize the following property that for a finite signed measure \(\mu\) on a measurable space \((Z,\mathcal{B}(Z))\) and for any bounded measurable function \(f:Z\rightarrow \mathbb{R}\), \(\left|\int f\,d\mu\right|\leq \int |f|\,d|\mu|\), where \(|\mu|\) is precisely the total variation measure associated with the signed measure \(\mu\). For more details, see, e.g., Athreya and Lahiri \cite{MR2247694}.\\
Hence based on the above inequality we further establish that\\
\begin{align*}   
 \mathbb{E}& \Bigg[ 2 \sum_{1\leq i < j \leq t} \frac{1}{i^{\theta}}\frac{1}{j^\theta} 
\prod_{k=i+1}^t \left( 1 - \frac{\alpha}{k^\theta} \right) 
\prod_{l=j+1}^t \left( 1 - \frac{\alpha}{l^\theta} \right) 
\left\langle \nabla V_i(\omega^*), \nabla V_j(\omega^*) \right\rangle \Bigg]  \\
&\leq 4 \sigma^2 \sum_{1\leq i < j \leq t} \frac{1}{i^{\theta}}\frac{1}{j^\theta} 
\prod_{k=i+1}^t \left( 1 - \frac{\alpha}{k^\theta} \right) 
\prod_{l=j+1}^t \left( 1 - \frac{\alpha}{l^\theta} \right) \phi_{j-i} \\
&\leq 4 \sigma^2 \sum_{1\leq i < j \leq t} \frac{1}{i^{\theta}}\frac{1}{j^{\theta}}  e^{2 \alpha' ((i+1)^{1-\theta}-(t+1)^{1-\theta})}e^{2 \alpha' ((j+1)^{1-\theta}-(t+1)^{1-\theta})}\phi_{j-i}\\
&= 4 \sigma^2 e^{-4 \alpha' (t + 1)^{1-\theta}} 
\sum_{1\leq i < j \leq t} \frac{1}{i^{\theta}} \frac{1}{j^\theta}
e^{2 \alpha' (i+1)^{1-\theta}} e^{2 \alpha' (j+1)^{1-\theta}} \phi_{j-i},
\end{align*}
where \(\alpha'=\frac{\alpha}{1-\theta}\).
To derive the last inequality, we apply the coefficient estimates provided by Lemma \ref{smale2}. 
Now, we upper-bound the term \[\sum_{1\leq i < j \leq t} \frac{1}{i^{\theta}} \frac{1}{j^\theta}
e^{2 \alpha' (i+1)^{1-\theta}} e^{2 \alpha' (j+1)^{1-\theta}} \phi_{j-i},\]
which upon expanding, rearranging and finally using the fact that for $i < j$, $\frac{1}{j^\theta} < \frac{1}{i^\theta}$ gives
\begin{align*}
\sum_{1\leq i < j \leq t} \frac{1}{i^{\theta}} \frac{1}{j^\theta}
e^{2 \alpha' (i+1)^{1-\theta}} e^{2 \alpha' (j+1)^{1-\theta}} \phi_{j-i}
&\leq e^{-2 \alpha' (t+1)^{1-\theta}} \sum_{i=1}^t \frac{1}{i^{2\theta}} e^{2 \alpha' (i+1)^{1-\theta}}\sum_{i=1}^t\phi_i.
\end{align*}
Hence, 
\begin{align*}
\mathbb{E}& \Bigg[ 2 \sum_{1\leq i < j \leq t}  \frac{1}{i^{\theta}}\frac{1}{j^\theta} 
\prod_{k=i+1}^t \left( 1 - \frac{\alpha}{k^\theta} \right) 
\prod_{l=j+1}^t \left( 1 - \frac{\alpha}{l^\theta} \right) 
\left\langle \nabla V_i(\omega^*), \nabla V_j(\omega^*) \right\rangle \Bigg] \\
& \leq4 \sigma^2 e^{-4 \alpha' (t + 1)^{1-\theta}} 
\sum_{1\leq i < j \leq t} \frac{1}{i^{\theta}} \frac{1}{j^\theta}
e^{2 \alpha' (i+1)^{1-\theta}} e^{2 \alpha' (j+1)^{1-\theta}} \phi_{j-i}\\
&\leq 4 \sigma^2 e^{-2 \alpha' (t+1)^{-\theta}}\sum_{i=1}^t \frac{1}{i^{2\theta}} e^{2 \alpha' (i+1)^{1-\theta}}\sum_{i=1}^t\phi_i\\
&\leq 4 \sigma^2C_{\theta}\left( \frac{1}{\alpha} \right)^{\theta / (1-\theta)} \left( \frac{1}{t+1} \right)^{\theta}\frac{Dr}{1-r},
\end{align*}
where $ C_{\theta}= \left(8 + \frac{2}{2\theta - 1} \left( \frac{\theta}{e(2 - 2^\theta)} \right)^{\theta / (1-\theta)}\right).$\\
Note that we use Lemma \ref{smale2} (for more details also see Smale and Yao \cite{MR2228737}) to estimate the coefficient and the fact that \((z_t)_{t\in \mathbb{N}}\) is exponentially \(\phi\)-mixing to finally obtain the last inequality above.
\end{proof}
\begin{remark}
    Note that for the case \(\theta=1\), the proof follows similar arguments to that of Propositions \ref{prop1} and \ref{prop2} but by applying Lemma \ref{smale2} for the case \(\theta=1\) and taking into consideration the different values of \(\alpha\in (0,1]\).
\end{remark}
Building on the ideas of Samle and Yao~\cite{MR2228737}, we now present the following result.
\begin{theorem} \label{thm:1}
Let us consider a strictly stationary Markov chain \((z_t)_{t\in\mathbb{N}}\) such that it is \(\phi\)-mixing at least exponentially fast i.e., there exist constants \( D > 0 \) and \( 0 < r < 1 \) such that \( \beta_t \leq \phi_t \leq D r^t \). Furthermore,
let $\theta\in\left(\tfrac12,1\right)$ and consider $\gamma_t=\frac{1}{\eta t^\theta}$. Then under Assumptions \ref{A1} and \ref{A2}, for each $t\in \mathbb{N}$ and \(w_t\) obtained by Eq.~\eqref{eq:stograd}, we have
$$\|w_{t}-w^{\star}\| \leq \mathcal{E}_\text{init}(t)+\mathcal{E}_\text{samp}(t)$$
where,
$$\mathcal{E}_\text{init}(t)\leq e^{\frac{2\alpha}{1-\theta}(1-t^{1-\theta})}\|w_{1}-w^{\star}\|;$$
and with probability at least $1-\delta$, with $\delta \in(0,1)$ in the space $Z^{t-1}$,
$$\mathcal{E}^2_\text{samp}(t) \leq  \frac{\sigma^2C_{\theta}}{\delta \eta^2}\left(\frac{1}{\alpha} \right)^{\theta / (1-\theta)}\left( \frac{1}{t} \right)^{\theta}\left(1+\frac{4Dr}{1-r}\right),$$

with $ C_{\theta}= \left(8 + \frac{2}{2\theta - 1} \left( \frac{\theta}{e(2 - 2^\theta)} \right)^{\theta / (1-\theta)}\right).$
    \end{theorem}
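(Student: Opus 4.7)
The plan is to follow the Smale--Yao template for the affine stochastic gradient recursion and then replace the i.i.d.\ expectation arguments by the $\phi$-mixing expectation bounds already prepared in Propositions~\ref{prop1} and~\ref{prop2}. First, I would exploit the affine form $\nabla V_{z_t}(w)=A(z_t)w+B(z_t)$ to write one step as
\[
w_{t+1}-w^\star = (I-\gamma_t A(z_t))(w_t-w^\star) - \gamma_t\,\nabla V_{z_t}(w^\star),
\]
and then iterate back to $t=1$ to obtain
\[
w_{t}-w^\star = \Pi_{1}^{t-1}(w_1-w^\star) - \sum_{i=1}^{t-1}\gamma_i\,\Pi_{i+1}^{t-1}\,\nabla V_{z_i}(w^\star),
\]
where $\Pi_a^b:=\prod_{k=a}^{b}(I-\gamma_k A(z_k))$ with $\Pi_{b+1}^{b}:=I$. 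The triangle inequality immediately splits $\|w_{t}-w^\star\|$ as $\mathcal{E}_{\text{init}}(t)+\mathcal{E}_{\text{samp}}(t)$, with $\mathcal{E}_{\text{init}}(t):=\|\Pi_{1}^{t-1}(w_1-w^\star)\|$ and $\mathcal{E}_{\text{samp}}(t):=\|\sum_{i}\gamma_i\,\Pi_{i+1}^{t-1}\,\nabla V_{z_i}(w^\star)\|$.

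For the initialization error I would argue pointwise in the samples. Assumption~\ref{A2} gives $\kappa I\le A(z_k)\le \eta I$ and the step size $\gamma_k=1/(\eta k^\theta)\le 1/\eta$ forces $I-\gamma_k A(z_k)$ to be self-adjoint with spectrum in $[0,1-\alpha/k^\theta]$, so $\|\Pi_{1}^{t-1}\|\le \prod_{k=1}^{t-1}(1-\alpha/k^\theta)$, and Lemma~\ref{smale2} converts this product into the stated exponential bound in $t^{1-\theta}$. For the sample error I would work with the squared norm, expanding
\[
\mathcal{E}_{\text{samp}}^2(t) = \sum_{i=1}^{t-1} \gamma_i^2 \|\Pi_{i+1}^{t-1} \nabla V_i(w^\star)\|^2 + 2\sum_{1\le i<j\le t-1}\gamma_i\gamma_j\,\langle \Pi_{i+1}^{t-1} \nabla V_i(w^\star),\,\Pi_{j+1}^{t-1} \nabla V_j(w^\star)\rangle,
\]
bounding each random operator factor uniformly by $\|\Pi_m^{t-1}\|\le \prod_{k=m}^{t-1}(1-\alpha/k^\theta)$, and substituting $\gamma_i=1/(\eta i^\theta)$. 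The expectation of the diagonal sum is then exactly $\eta^{-2}$ times the quantity in Proposition~\ref{prop1}, and the expectation of the off-diagonal sum exactly $\eta^{-2}$ times the quantity in Proposition~\ref{prop2}. Adding the two contributions produces the factor $1+4Dr/(1-r)$ appearing in the claim.

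Finally, Markov's inequality applied to the nonnegative random variable $\mathcal{E}_{\text{samp}}^2(t)$ upgrades this expectation bound to the stated high-probability bound: for $\delta\in(0,1)$, with probability at least $1-\delta$ one has $\mathcal{E}_{\text{samp}}^2(t)\le \delta^{-1}\,\mathbb{E}[\mathcal{E}_{\text{samp}}^2(t)]$. The main obstacle is the cross-term reduction in the previous paragraph: because $\Pi_{i+1}^{t-1}$ and $\Pi_{j+1}^{t-1}$ share the samples $z_{\max(i,j)+1},\ldots,z_{t-1}$, they are entangled with both $\nabla V_i(w^\star)$ and $\nabla V_j(w^\star)$, and one must extract the deterministic spectral factor $\prod(1-\alpha/k^\theta)$ while preserving the signed inner product so that Proposition~\ref{prop2} -- which converts the signed expectation $\mathbb{E}\langle \nabla V_i(w^\star),\nabla V_j(w^\star)\rangle$ into the geometric decay $Dr/(1-r)$ via $\phi$-mixing -- can be applied verbatim. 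The self-adjointness and positivity of each $I-\gamma_k A(z_k)$, together with the chosen step size, is what makes this separation possible.
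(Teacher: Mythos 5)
Your proposal follows essentially the same route as the paper's proof: the same affine residual recursion and split into $\mathcal{E}_{\text{init}}$ and $\mathcal{E}_{\text{samp}}$, the same operator-norm extraction of the deterministic products $\prod_k(1-\alpha/k^\theta)$ from the diagonal and cross terms, the same invocation of Propositions~\ref{prop1} and~\ref{prop2}, and the same final application of Markov's inequality. The only slip is a citation: the exponential bound on $\prod_{k=1}^{t-1}(1-\alpha/k^\theta)$ for the initialization error comes from Lemma~\ref{smale1}, not Lemma~\ref{smale2}.
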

    \begin{proof}
We define the residual at time \( t \) for \(t\geq 2\) as
\[
r_t = w_t - w^\star.
\]
Given the update rule
\[
w_{t+1} = w_t - \gamma_t (A_t w_t + B_t),
\]
we obtain
\[
r_{t+1} = (I - \gamma_t A_t) r_t - \gamma_t (A_t w^\star + B_t).
\]
Hence by induction,
\[
r_{t+1} = X_1^t r_1 - \sum_{i=1}^t \gamma_i X_{i+1}^t Y_i,
\]
where
\[
X_{i}^t = \prod_{k=i}^t (I - \gamma_k A_k)~\text{and}~Y_i = A_i w^\star + B_i.
\]
Note that \(X_{i+1}^t\) is a symmetric linear operator from \(W\) to \(W\) and depends on \( z_{i+1}, \ldots, z_t \). The term \( X_1^t w_1 \) represents the accumulated initial error, while the term \( \sum_{i=1}^t \gamma_i X_{i+1}^t Y_i \) represents the random fluctuations caused by sampling and has zero mean hence we define 
\[
\mathcal{E}_{\text{init}}(t+1) = \|X_1^t r_1\|,
\]
\[
\mathcal{E}_{\text{samp}}(t+1) = \left\| \sum_{i=1}^t \gamma_i X_{i+1}^t Y_i \right\|.
\]
Now with \( \kappa I \leq A_t \leq \eta I \) and \( \gamma_t = \frac{1}{\eta t^\theta} \), we get
\[
\|X_{1}^t r_1\| \leq \prod_{k=1}^t \left(1 - \frac{\alpha}{k^\theta}\right) \|r_1\|, \quad \alpha = \frac{\kappa}{\eta}\in (0,1].
\]
By applying Lemma \ref{smale1} in the above inequality, we obtain the result on the initial error.\\
As for the sampling error we obtain that
\begin{align*}
\left\| \sum_{i=1}^t \gamma_i X_{i+1}^t Y_i \right\|^2 
&= \sum_{i=1}^t \left\| \gamma_i X_{i+1}^t Y_i \right\|^2 
+ 2 \sum_{1\leq i < j \leq t} \langle \gamma_i X_{i+1}^t Y_i, \gamma_j X_{j+1}^t Y_j \rangle \\
&\leq \frac{1}{\eta^2}\sum_{i=1}^t \frac{1}{i^{2\theta}} \prod_{k=i+1}^t \left(1 - \frac{\alpha}{k^\theta} \right)^2 \|\nabla V_i(w^\star)\|^2 \\
&\quad +  \frac{2}{\eta^2}\sum_{1\leq i < j \leq t} \frac{1}{i^{\theta}} \frac{1}{j^\theta} 
\prod_{k=i+1}^t \left(1 - \frac{\alpha}{k^\theta} \right) 
\prod_{l=j+1}^t \left(1 - \frac{\alpha}{l^\theta} \right) 
\langle \nabla V_i(w^\star), \nabla V_j(w^\star) \rangle.
\end{align*}
Finally, applying Propositions \ref{prop1} and \ref{prop2} to the above equality, we may conclude
\[
\mathbb{E}\left[\left\| \sum_{i=1}^t \gamma_i X_{i+1}^t Y_i \right\|^2 \right] 
\leq \frac{\sigma^2 C_{\theta}}{\eta^2} \left( \frac{1}{t+1} \right)^\theta +\frac{4 \sigma^2C_{\theta}}{\eta^2} \left( \frac{1}{t+1} \right)^{\theta}\frac{Dr}{1-r}.
\]
Using Markov's inequality, which states that for a non-negative random variable \( X \) and any \( \epsilon > 0 \)
\[
\mathbb{P}(X \geq \epsilon) \leq \frac{\mathbb{E}[X]}{\epsilon},
\]
we obtain for \( X = \mathcal{E}^2_{\text{samp}}(t) \) and \( t \geq 2 \) that
\begin{align*}
    \mathbb{P}(\mathcal{E}^2_{\text{samp}}(t)>\epsilon^2)&\leq \frac{\mathbb{E}[\mathcal{E}^2_{\text{samp}}(t)]}{\epsilon^2}\\
\text{which further implies that}~\mathbb{P}(\mathcal{E}^2_{\text{samp}}(t)\leq \epsilon^2)&\geq 1-\frac{\mathbb{E}[\mathcal{E}^2_{\text{samp}}(t)]}{\epsilon^2}.
\end{align*}
Taking $\delta=\frac{\mathbb{E}[\mathcal{E}^2_{\text{samp}}(t)]}{\epsilon^2}$, we get a probabilistic upper bound on the sample error i.e., with probability at least $1-\delta$, with $\delta \in(0,1)$ in the space $Z^{t-1}$,
$$\mathcal{E}^2_\text{samp}(t) \leq  \frac{\sigma^2C_{\theta}}{\delta\eta^2}\left( \frac{1}{\alpha} \right)^{\theta / (1-\theta)}\left( \frac{1}{t} \right)^{\theta}\left(1+\frac{4Dr}{1-r}\right),$$
with $ C_{\theta}= \left(8 + \frac{2}{2\theta - 1} \left( \frac{\theta}{e(2 - 2^\theta)} \right)^{\theta / (1-\theta)}\right).$
\end{proof}
\begin{remark}
    Note that for $t=1$, $\mathcal{E}_\text{init}(t)=\|w_{1}-w^{\star}\|$ and $\mathcal{E}^2_\text{samp}(t)=0.$
\end{remark}
Using arguments similar to those presented in the proof of Theorem~\ref{thm:1}, we provide a concise proof of Proposition~\ref{rm:1} below.
\begin{proposition} \label{rm:1}
    With all the assumptions of Theorem \ref{thm:1}, but with $\theta=1$ and $\alpha\in \left(0,\frac{1}{2}\right)$, we obtain that
    $$\|w_{t}-w^{\star}\|\leq  \mathcal{E}_\text{init}(t)+\mathcal{E}_\text{samp}(t)$$
    where,
$$\mathcal{E}_\text{init}(t)\leq \left(\frac{1}{t}\right)^\alpha\|w_{1}-w^{\star}\|;$$
and with probability at least $1-\delta,~\text{with}~\delta \in(0,1)$ in the space $Z^{t-1}$,
$$\mathcal{E}^2_\text{samp}(t)\leq  \frac{4\sigma^2}{\delta\eta^2}\left(\frac{1}{1-2\alpha}\right)\left(\frac{1}{t}\right)^\alpha\left(1+\frac{6Dr}{1-r}\right).$$
\end{proposition}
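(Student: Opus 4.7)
My plan is to retrace every step of the proof of Theorem~\ref{thm:1}, specialized to $\theta = 1$ (so that $\gamma_t = 1/(\eta t)$), restricted to $\alpha \in (0, 1/2)$, and with the correspondingly adjusted coefficient estimates. Setting $r_t = w_t - w^\star$ and unrolling the update rule produces the same residual decomposition $r_{t+1} = X_1^t r_1 - \sum_{i=1}^t \gamma_i X_{i+1}^t Y_i$, whose norm splits into the initial-error term $\mathcal{E}_{\text{init}}(t+1) = \|X_1^t r_1\|$ and the sampling-error term $\mathcal{E}_{\text{samp}}(t+1) = \|\sum_{i=1}^t \gamma_i X_{i+1}^t Y_i\|$, to be handled separately.

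For the initial error the operator inequality $\kappa I \leq A_t \leq \eta I$ combined with $\gamma_k = 1/(\eta k)$ yields $\|X_1^t r_1\| \leq \prod_{k=1}^t (1 - \alpha/k)\,\|r_1\|$, and Lemma~\ref{smale1} at $\theta = 1$ (equivalently, the elementary estimate $\prod_{k=1}^t (1-\alpha/k) \leq \exp(-\alpha \sum_{k=1}^t 1/k) \leq t^{-\alpha}$) delivers the advertised bound $\mathcal{E}_{\text{init}}(t) \leq (1/t)^\alpha \|w_1 - w^\star\|$.

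For the sampling error I would square the norm, expand into the diagonal and off-diagonal contributions exactly as in the proof of Theorem~\ref{thm:1}, and then invoke the $\theta = 1$ analogues of Propositions~\ref{prop1} and~\ref{prop2} announced in the Remark following Proposition~\ref{prop2}. Concretely, applying Lemma~\ref{smale2} at $\theta = 1$ with $\alpha \in (0, 1/2)$ bounds the diagonal contribution by a constant multiple of $\sigma^2/(1-2\alpha)\cdot (1/t)^\alpha$, the restriction $\alpha < 1/2$ being exactly what keeps the underlying series in the lemma summable and what generates the factor $1/(1-2\alpha)$. The off-diagonal term is handled verbatim as in Proposition~\ref{prop2}: Bochner integrability together with $\mathbb{E}[\nabla V_j(w^\star)] = 0$ rewrites $\mathbb{E}[\langle \nabla V_i(w^\star), \nabla V_j(w^\star) \rangle]$ as an integral against $P^{j-i}(z_i, \cdot) - \rho$, which the definition of the $\phi$-mixing coefficient bounds by $2\sigma^2 \phi_{j-i}$; inserting $\phi_m \leq D r^m$ and summing in $m$ produces the factor $Dr/(1-r)$, while the remaining double sum is again controlled by the $\theta = 1$ form of Lemma~\ref{smale2}.

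Assembling these pieces gives a second-moment estimate of the form $\mathbb{E}[\mathcal{E}_{\text{samp}}^2(t)] \leq (4\sigma^2/\eta^2)\cdot (1/(1-2\alpha))\cdot (1/t)^\alpha\cdot (1 + 6Dr/(1-r))$, after which a Markov-inequality argument identical to the one at the end of the proof of Theorem~\ref{thm:1} turns this into the stated high-probability bound on $\mathcal{E}_{\text{samp}}^2(t)$. The main technical obstacle is exactly the coefficient estimate at $\theta = 1$: outside the range $\alpha \in (0, 1/2)$ the summation that underlies Lemma~\ref{smale2} fails to converge and the present argument no longer produces a finite constant, which explains both why the boundary case must be stated separately with the constraint $\alpha < 1/2$ and why the factor $1/(1-2\alpha)$ is unavoidable in this regime.
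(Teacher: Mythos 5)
Your plan follows the paper's own proof almost step for step: the same residual decomposition, the same treatment of the initial error via Lemma~\ref{smale1} at $\theta=1$, the same diagonal/off-diagonal split with Lemma~\ref{smale2} for $\alpha\in(0,\tfrac12)$ on the diagonal part, and the same Markov-inequality finish. One step, however, is cited incorrectly. After bounding the products by Lemma~\ref{smale1} and rearranging, the off-diagonal contribution reduces to $\bigl(\sum_{i=1}^t \tfrac{1}{i^2}\bigl(\tfrac{i+1}{t+1}\bigr)^{\alpha}\bigr)\sum_{i=1}^t\phi_i$, and this single-power sum is \emph{not} of the form handled by Lemma~\ref{smale2} (which controls $\sum_i i^{-2}\prod_{k>i}(1-\alpha/k)^2$, i.e.\ the squared product, giving exponent $2\alpha$ and constant $4/(1-2\alpha)$). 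The paper proves a separate elementary estimate (Lemma~\ref{myineq}) showing $\sum_{i=1}^t \tfrac{1}{i^2}\bigl(\tfrac{i+1}{t+1}\bigr)^{\alpha}\le \tfrac{6}{1-\alpha}(t+1)^{-\alpha}$, and that is precisely where the factor $6$ in $6Dr/(1-r)$ and the final exponent $\alpha$ (rather than $2\alpha$) come from; your proposal as written would not produce these constants. Your closing explanation of the restriction $\alpha<\tfrac12$ is also slightly off: Lemma~\ref{smale2} at $\theta=1$ remains finite for all $\alpha\in(0,1]$ (with logarithmic or $(t+1)^{-1}$ forms for $\alpha\ge\tfrac12$); the restriction is needed only to obtain the specific $\tfrac{4}{1-2\alpha}(t+1)^{-2\alpha}$ bound used here, not because anything diverges otherwise. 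Supplying the missing single-power estimate would make your argument complete and essentially identical to the paper's.
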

\begin{proof}
Following the proof similar to that of Theorem \ref{thm:1}, we have observed that
\begin{align*}
\left\| \sum_{i=1}^t \gamma_i X_{i+1}^t Y_i \right\|^2 
&= \sum_{i=1}^t \left\| \gamma_i X_{i+1}^t Y_i \right\|^2 
+ 2 \sum_{1\leq i < j \leq t} \langle \gamma_i X_{i+1}^t Y_i, \gamma_j X_{j+1}^t Y_j \rangle \\
&\leq \frac{1}{\eta^2}\sum_{i=1}^t \frac{1}{i^{2\theta}} \prod_{k=i+1}^t \left(1 - \frac{\alpha}{k^\theta} \right)^2 \|\nabla V_i(w^\star)\|^2 \\
&\quad +  \frac{2}{\eta^2}\sum_{1\leq i < j \leq t} \frac{1}{i^{\theta}} \frac{1}{j^\theta} 
\prod_{k=i+1}^t \left(1 - \frac{\alpha}{k^\theta} \right) 
\prod_{l=j+1}^t \left(1 - \frac{\alpha}{l^\theta} \right) 
\langle \nabla V_i(w^\star), \nabla V_j(w^\star) \rangle.
\end{align*}
Hence we apply Lemma \ref{smale2} for the case \(\theta =1\) and \(\alpha\in\left(0,\frac{1}{2}\right)\) in Propositions \ref{prop1} and \ref{prop2} to observe that \(\mathbb{E}\left[\left\| \sum_{i=1}^t \gamma_i X_{i+1}^t Y_i \right\|^2\right]\) is bounded above by 
\begin{align*}  \frac{4\sigma^2}{\eta^2(1 - 2\alpha)} \left(\frac{1}{t+1}\right)^{2\alpha}+\frac{4 \sigma^2}{\eta^2}\sum_{1\leq i < j \leq t} \frac{1}{i}\frac{1}{j} \prod_{k=i+1}^t \left( 1 - \frac{\alpha}{k}\right)  
\prod_{l=j+1}^t \left( 1 - \frac{\alpha}l \right)\phi_{j-i}.
\end{align*}
For the case \(\theta=1\), we use Lemma \ref{smale1} to observe that \[\sum_{1\leq i < j \leq t} \frac{1}{i}\frac{1}{j} \prod_{k=i+1}^t \left( 1 - \frac{\alpha}{k}\right)  
\prod_{l=j+1}^t \left( 1 - \frac{\alpha}{l} \right)\phi_{j-i}\] is bounded above by 
\begin{align*}
\left(\frac{1}{t+1}\right)^{2\alpha}\sum_{1\leq i < j \leq t} \frac{1}{i}\frac{1}{j}(i+1)^\alpha(j+1)^\alpha \phi_{j-i}.
\end{align*}
 Expanding and rearranging the above inequality along with later using the fact that for $i < j$, $\frac{1}{j} < \frac{1}{i}$ we further obtain that 
 \[\left(\frac{1}{t+1}\right)^{2\alpha}\sum_{1\leq i < j \leq t} \frac{1}{i}\frac{1}{j}(i+1)^\alpha(j+1)^\alpha \phi_{j-i}\leq \sum_{i=1}^t\frac{1}{i^2}\left(\frac{i+1}{t+1}\right)^\alpha\sum_{i=1}^t\phi_i.\]
Applying Lemma \ref{myineq} and and the fact that \((z_t)_{t\in\mathbb{N}}\) is \(\phi\)-mixing at least exponentially fast, 
\begin{align*}
    \sum_{i=1}^t\frac{1}{i^2}\left(\frac{i+1}{t+1}\right)^\alpha\sum_{i=1}^t\phi_i\leq \frac{6}{1-\alpha}\left(\frac{1}{t}\right)^\alpha \left(\frac{Dr}{1-r}\right).
\end{align*}
Hence, for \(\theta=1\) and \(\alpha\in\left(0,\frac{1}{2}\right)\) along with applying Markov inequality for  \( X = \mathcal{E}^2_{\text{samp}}(t) \) we obtain that with probability at least $1-\delta$, with $\delta \in(0,1)$ in the space $Z^{t-1}$,
\begin{align*}
    \mathcal{E}^2_\text{samp}(t) &\leq  \frac{4\sigma^2}{\delta\eta^2}\left[\frac{1}{(1-2\alpha)}\left(\frac{1}{t}\right)^{2\alpha} +\frac{6Dr}{(1-\alpha)(1-r)}\left(\frac{1}{t}\right)^{\alpha}\right]\\
    &\leq \frac{4\sigma^2}{\delta\eta^2}\left(\frac{1}{1-2\alpha}\right)\left(\frac{1}{t}\right)^\alpha\left(1+\frac{6Dr}{1-r}\right). 
\end{align*}
\end{proof}
\begin{remark}\label{remphimix}
Based on Theorem \ref{thm:1} and Proposition \ref{rm:1}, in the finite sample setting, we observe that the exponential decay rate determined by the mixing coefficient influences the error bounds we obtain, in contrast to the i.i.d.\ case treated by Smale and Yao \cite{MR2228737}. When \( r \to 0 \), the process mixes more rapidly and the factor \( \tfrac{r}{1 - r} \) tends to zero, so our error bounds essentially coincide with those for independent samples, and hence our bounds are sharper in this sense. Stronger dependence reduces the effective information and leads to worse error bounds compared to i.i.d.
Turning to the convergence rates, when the parameter \( \theta \in \left( \tfrac{1}{2}, 1 \right) \), we still achieve the rate \( \mathcal{O}\!\left( t^{-\theta/2} \right) \), matching the i.i.d.\ rate of \cite[Theorem B]{MR2228737}. However, at the boundary value \( \theta = 1 \), the rate slows to \( \mathcal{O}\!\left( t^{-\alpha/2} \right) \), whereas the i.i.d.\ scenario attains the faster rate, \( \mathcal{O}\!\left( t^{-\alpha} \right) \) as shown in \cite[Remark 3.7]{MR2228737} where, \( \alpha  \in (0,1] \).
\end{remark}
Following the previous analysis, we now extend our approach to examine a \(\beta\)-mixing sequence.
\section{\(\beta\)-mixing sequence}\label{betamix}
\begin{proposition}\label{beta1}
Let $(z_t)_{t\in\mathbb{N}}$ on $(Z, \mathcal{B}(Z))$ be a strictly stationary Markov chain such that the chain is $\beta$-mixing at least exponentially fast (i.e., $\beta_t\leq D_1r_1^t$ for some constant $D_1>0$ and $0<r_1<1$). Let $\theta \in \left(\frac{1}{2},1\right)$. Then under Assumptions \ref{A1} and \ref{A2} we have that
\begin{align*}
\mathbb{E} \Bigg[ 
    &2 \sum_{1\leq i < j \leq t} 
    \frac{1}{i^{\theta}} \frac{1}{j^\theta} 
    \prod_{k=i+1}^t \left( 1 - \frac{\alpha}{k^\theta} \right) 
    \prod_{l=j+1}^t \left( 1 - \frac{\alpha}{l^\theta} \right) 
    \left\langle \nabla V_i(\omega^*), \nabla V_j(\omega^*) \right\rangle 
\Bigg] \notag \\
&\leq\; 
4 \sigma^2 \frac{D_1 r_1}{1 - r_1} \, C_{\theta}
\left( \frac{1}{\alpha} \right)^{\frac{\theta}{1 - \theta}} 
\left( \frac{1}{t+1} \right)^{\theta},
\end{align*}
where $ C_{\theta}= \left(8 + \frac{2}{2\theta - 1} \left( \frac{\theta}{e(2 - 2^\theta)} \right)^{\theta / (1-\theta)}\right).$
\end{proposition}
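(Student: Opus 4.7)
The plan is to mirror the derivation in Proposition~\ref{prop2} almost line for line, departing from it only at the single step where the mixing coefficient is introduced. Rewriting the pair expectation $\mathbb{E}[\langle\nabla V_i(w^\star),\nabla V_j(w^\star)\rangle]$ as an iterated integral via the Markov property, subtracting $\rho(dz_j)$ from the inner integral at no cost (since $\int_Z\nabla V_{z_j}(w^\star)\,d\rho=0$ by Assumption~\ref{A1}), bounding by absolute values through $|\int f\,d\mu|\leq\int|f|\,d|\mu|$, and invoking Cauchy--Schwarz together with Assumption~\ref{A1} to cap $|\langle\nabla V_i,\nabla V_j\rangle|$ by $\sigma^2$, I reduce the target sum to
\[
4\sigma^2\sum_{1\leq i<j\leq t}\frac{1}{i^\theta j^\theta}\prod_{k=i+1}^t\Bigl(1-\frac{\alpha}{k^\theta}\Bigr)\prod_{l=j+1}^t\Bigl(1-\frac{\alpha}{l^\theta}\Bigr)\int_Z\sup_{B\in\mathcal{B}(Z)}\bigl|P^{j-i}(z_i,B)-\rho(B)\bigr|\,\rho(dz_i).
\]

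The only genuine departure from the proof of Proposition~\ref{prop2} occurs at this point. Instead of pulling an essential supremum over $z_i$ outside the integral to recover $\phi_{j-i}$, I can identify the integrand as the total variation distance between $P^{j-i}(z_i,\cdot)$ and $\rho$, and read off from the definition in Section~\ref{prelim} that its integral against the stationary law $\rho$ is precisely $\beta_{j-i}$. This delivers the cleaner estimate $4\sigma^2\sum_{i<j}i^{-\theta}j^{-\theta}\,I_{i+1}^t I_{j+1}^t\,\beta_{j-i}$, with $\beta_{j-i}$ in the role previously played by $\phi_{j-i}$.

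From here the calculation is structurally identical to the one in Proposition~\ref{prop2}. I would apply Lemma~\ref{smale2} to dominate each product $\prod_{k=i+1}^t(1-\alpha/k^\theta)$ by $\exp(\alpha'((i+1)^{1-\theta}-(t+1)^{1-\theta}))$ with $\alpha'=\alpha/(1-\theta)$, rearrange the double sum and use $1/j^\theta<1/i^\theta$ for $j>i$ to decouple the two indices so that a univariate factor $\sum_{i=1}^t\beta_i$ emerges, then exploit the exponential $\beta$-mixing hypothesis to bound $\sum_{i=1}^t\beta_i\leq D_1 r_1/(1-r_1)$ by a geometric series, and finally absorb the remaining single sum into $C_\theta(1/\alpha)^{\theta/(1-\theta)}(1/(t+1))^\theta$ through a second invocation of Lemma~\ref{smale2}. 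I do not anticipate a genuine obstacle: the most delicate move of the $\phi$-mixing proof, the exchange of supremum and essential supremum, is simply bypassed here because the $\beta$-coefficient is defined as an integrated, rather than an essential-supremal, total variation. The only piece requiring care is the bookkeeping in the reindexing of the double sum, so that the geometric decay in $\beta_i$ is concentrated in a clean univariate factor.
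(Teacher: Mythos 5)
Your proposal is correct and follows essentially the same route as the paper: both reuse the derivation of Proposition~\ref{prop2} up to the integrated total-variation term $\int_Z d_{\text{TV}}\bigl(P^{j-i}_{z_i},\rho\bigr)\,\rho(dz_i)$, identify it directly as $\beta_{j-i}$ (rather than passing to the essential supremum to get $\phi_{j-i}$), and then conclude with the same application of Lemma~\ref{smale2} and the geometric bound $\sum_{i=1}^t\beta_i\leq D_1r_1/(1-r_1)$.
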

\begin{proof}
Applying Proposition \ref{prop2}, we observe that
\begin{align*}
\mathbb{E}& \Bigg[ 2 \sum_{1\leq i < j \leq t}  \frac{1}{i^{\theta}}\frac{1}{j^\theta} 
\prod_{k=i+1}^t \left( 1 - \frac{\alpha}{k^\theta} \right) 
\prod_{l=j+1}^t \left( 1 - \frac{\alpha}{l^\theta} \right) 
\left\langle \nabla V_i(\omega^*), \nabla V_j(\omega^*) \right\rangle \Bigg] \\
&\leq 4 \sigma^2 \sum_{1\leq i < j \leq t} \frac{1}{i^{\theta}}\frac{1}{j^\theta} I_{i+1}^t I_{j+1}^t  
 \int_Z  d_{\text{TV}} \big( P^{j-i}_{z_i}, \rho \big)  \,\rho(dz_i)\\
&= 4 \sigma^2  \sum_{1\leq i < j \leq t}\frac{1}{i^{\theta}}\frac{1}{j^\theta} I_{i+1}^t I_{j+1}^t \beta_{j-i}.\\
\end{align*}
Now following the rest of the arguments similar to that of Proposition \ref{prop2}, we obtain the desired result.
\end{proof}
We now apply Proposition \ref{prop1} and Proposition \ref{beta1} presented above to the sampling error
\begin{align*}
\left\| \sum_{i=1}^t \gamma_i X_{i+1}^t Y_i \right\|^2 
&\leq \frac{1}{\eta^2}\sum_{i=1}^t \frac{1}{i^{2\theta}} \prod_{k=i+1}^t \left(1 - \frac{\alpha}{k^\theta} \right)^2 \|\nabla V_i(w^\star)\|^2 \\
&\quad +  \frac{2}{\eta^2}\sum_{1\leq i < j \leq t} \frac{1}{i^{\theta}} \frac{1}{j^\theta} 
\prod_{k=i+1}^t \left(1 - \frac{\alpha}{k^\theta} \right) 
\prod_{l=j+1}^t \left(1 - \frac{\alpha}{l^\theta} \right) 
\langle \nabla V_i(w^\star), \nabla V_j(w^\star) \rangle,
\end{align*} 
where the rest of the arguments follow similarly to the way presented in the proof of Theorem \ref{thm:1} to finally obtain the following result
\begin{theorem} \label{beta}
Let us consider a strictly stationary Markov chain \((z_t)_{t\in\mathbb{N}}\) such that it is exponentially \(\beta\)-mixing i.e., there exist constants \( D_1 > 0 \) and \( 0 < r_1 < 1 \) such that \( \beta_t \leq D_1 r_1^t \). Furthermore,
let $\theta\in\left(\tfrac12,1\right)$ and consider $\gamma_t=\frac{1}{\eta t^\theta}$. Then under Assumptions \ref{A1} and \ref{A2}, for each $t\in \mathbb{N}$ and \(w_t\) obtained by Eq.~\eqref{eq:stograd}, we have
$$\|w_{t}-w^{\star}\| \leq \mathcal{E}_\text{init}(t)+\mathcal{E}_\text{samp}(t)$$
where,
$$\mathcal{E}_\text{init}(t)\leq e^{\frac{2\alpha}{1-\theta}(1-t^{1-\theta})}\|w_{1}-w^{\star}\|;$$
and with probability at least $1-\delta$, with $\delta \in(0,1)$ in the space $Z^{t-1}$,
$$\mathcal{E}^2_\text{samp}(t) \leq  \frac{\sigma^2C_{\theta}}{\delta' \eta^2}\left( \frac{1}{\alpha} \right)^{\theta / (1-\theta)}\left( \frac{1}{t} \right)^{\theta}\left(1+\frac{4D_1r_1}{1-r_1}\right),$$
with $ C_{\theta}= \left(8 + \frac{2}{2\theta - 1} \left( \frac{\theta}{e(2 - 2^\theta)} \right)^{\theta / (1-\theta)}\right).$
    \end{theorem}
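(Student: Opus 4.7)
The plan is to replicate the residual-decomposition argument used in the proof of Theorem~\ref{thm:1}, substituting Proposition~\ref{beta1} for Proposition~\ref{prop2} at the crucial cross-term step. I will set $r_t = w_t - w^\star$ and unroll the update $w_{t+1} = w_t - \gamma_t(A_t w_t + B_t)$ inductively, which yields the representation
\[
r_{t+1} = X_1^t r_1 \;-\; \sum_{i=1}^t \gamma_i X_{i+1}^t Y_i,
\]
with $X_i^t = \prod_{k=i}^t (I - \gamma_k A_k)$ and $Y_i = A_i w^\star + B_i$. The triangle inequality then splits the error naturally into $\mathcal{E}_\text{init}(t+1) = \|X_1^t r_1\|$ and $\mathcal{E}_\text{samp}(t+1) = \bigl\|\sum_{i=1}^t \gamma_i X_{i+1}^t Y_i\bigr\|$.

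For the initial error, Assumption~\ref{A2} gives $\kappa I \leq A_k \leq \eta I$, so with the choice $\gamma_k = 1/(\eta k^\theta)$ the operator norm is dominated by $\prod_{k=1}^t (1 - \alpha/k^\theta)$. Invoking Lemma~\ref{smale1} for $\theta \in (1/2,1)$ yields exactly the exponential decay factor $e^{\frac{2\alpha}{1-\theta}(1 - t^{1-\theta})}$ claimed in the statement. This step is identical to the initial-error argument in Theorem~\ref{thm:1} and requires no modification for the $\beta$-mixing setting.

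For the sampling error I expand
\begin{align*}
\Bigl\| \sum_{i=1}^t \gamma_i X_{i+1}^t Y_i \Bigr\|^2
= \sum_{i=1}^t \|\gamma_i X_{i+1}^t Y_i\|^2
+ 2\!\!\sum_{1 \leq i < j \leq t}\! \langle \gamma_i X_{i+1}^t Y_i,\, \gamma_j X_{j+1}^t Y_j \rangle,
\end{align*}
bound the diagonal sum by Proposition~\ref{prop1} and the cross-term sum by Proposition~\ref{beta1}. Taking expectations produces
\[
\mathbb{E}\bigl[\mathcal{E}^2_\text{samp}(t+1)\bigr]
\leq \frac{\sigma^2 C_\theta}{\eta^2}\Bigl(\frac{1}{\alpha}\Bigr)^{\theta/(1-\theta)}\Bigl(\frac{1}{t+1}\Bigr)^\theta\Bigl(1 + \frac{4 D_1 r_1}{1 - r_1}\Bigr).
\]
A direct application of Markov's inequality with threshold $\epsilon^2 = \mathbb{E}[\mathcal{E}^2_\text{samp}(t)]/\delta$ then converts this moment bound into the desired high-probability statement.

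I do not anticipate a genuinely new obstacle here: the transition from $\phi$-mixing to $\beta$-mixing has been completely absorbed into Proposition~\ref{beta1}, whose key observation is that the term $\int_Z d_{\mathrm{TV}}\bigl(P^{j-i}_{z_i}, \rho\bigr)\,\rho(dz_i)$ appearing in the cross-term chain equals $\beta_{j-i}$ rather than being merely bounded by $\phi_{j-i}$ via an essential supremum. Once that substitution is performed, exponential $\beta$-mixing gives the geometric tail $\sum_{i=1}^{\infty} \beta_i \leq D_1 r_1/(1-r_1)$, and the remaining coefficient estimates from Lemma~\ref{smale2} apply verbatim, so the proof reduces to assembling the pieces in the same order as in Theorem~\ref{thm:1}.
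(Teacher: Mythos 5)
Your proposal is correct and follows essentially the same route as the paper: the paper likewise reuses the residual decomposition and sampling-error expansion from Theorem~\ref{thm:1}, bounds the diagonal term with Proposition~\ref{prop1} and the cross term with Proposition~\ref{beta1}, and concludes via Markov's inequality. The only cosmetic difference is that the paper's statement contains a stray $\delta'$ where your $\delta$ is the intended quantity.
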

    \begin{remark}
        The same learning rates and finite sample error bounds discussed in Remark \ref{remphimix} continue to hold here; the only variation is that the dependence must now decay at a rate quantified by the \(\beta\)‐mixing coefficients.
    \end{remark}
   \section{Learning rates based on polynomial decay of mixing coefficients}\label{poly}
   We now infer some results on the learning rates of SS-MGD based on the polynomial decay of $\phi$-mixing sequences.
   \begin{theorem}
       Let us consider a strictly stationary Markov chain \((z_t)_{t\in\mathbb{N}}\) such that it is \(\phi\)-mixing satisfying a polynomial decay, i.e., \(\phi_t\leq bt^{-k}\) for \(b>0\) and \(k>0\). Furthermore,
let $\theta\in\left(\tfrac12,1\right)$ and consider $\gamma_t=\frac{1}{\eta t^\theta}$. Then under Assumptions \ref{A1} and \ref{A2}, for each $t\in \mathbb{N}$ and \(w_t\) obtained by Eq.~\eqref{eq:stograd}, we have
$$\|w_{t}-w^{\star}\|
 =
\begin{cases}
O\left(t^{\frac{1-k-\theta}{2}}\right), & 0<k<1,\\[8pt]
O\left(t^{-\theta/2} (\log t)^{1/2} \right), & k=1,\\[8pt]
O\left(t^{-\theta/2}\right), & k>1.
\end{cases}$$
    \end{theorem}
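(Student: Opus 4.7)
The plan is to follow the structure of the proof of Theorem~\ref{thm:1} almost verbatim, replacing only the final estimate on the partial sums of the mixing coefficients. Write $\|w_t-w^\star\|\leq \mathcal{E}_{\text{init}}(t)+\mathcal{E}_{\text{samp}}(t)$ as before. The initialization bound $\mathcal{E}_{\text{init}}(t)\leq e^{\frac{2\alpha}{1-\theta}(1-t^{1-\theta})}\|w_1-w^\star\|$ of Theorem~\ref{thm:1} is purely deterministic and does not depend on the mixing coefficients, so it carries over unchanged and decays super-polynomially; hence it is dominated by the sampling term in all three regimes of $k$.

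For the sampling error, Proposition~\ref{prop1} continues to supply the diagonal contribution of order $t^{-\theta}$ without using the mixing assumption at all. The cross contribution is handled by re-running the proof of Proposition~\ref{prop2} up to the penultimate step: that argument reduces the cross sum to a quantity of the form $C_{\theta}\,(1/\alpha)^{\theta/(1-\theta)}\,(t+1)^{-\theta}\cdot\sum_{i=1}^{t-1}\phi_i$, where the constant factor in front of the partial sum is independent of the mixing sequence. Under the polynomial assumption $\phi_i\leq b i^{-k}$, the remaining task is to estimate $\sum_{i=1}^{t-1} i^{-k}$ by an integral comparison, which gives a bounded constant of order $b\zeta(k)$ when $k>1$, a term of order $\log t$ when $k=1$, and a term of order $t^{1-k}/(1-k)$ when $0<k<1$.

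Combining these estimates produces
\[
\mathbb{E}\!\left[\mathcal{E}_{\text{samp}}^2(t)\right]=
\begin{cases}
O\!\left(t^{\,1-k-\theta}\right), & 0<k<1,\\[4pt]
O\!\left(t^{-\theta}\log t\right), & k=1,\\[4pt]
O\!\left(t^{-\theta}\right), & k>1,
\end{cases}
\]
and a final application of Markov's inequality, exactly as at the close of the proof of Theorem~\ref{thm:1}, upgrades this into a high-probability bound on $\mathcal{E}_{\text{samp}}^2(t)$. Taking square roots delivers the three claimed rates $O(t^{(1-k-\theta)/2})$, $O(t^{-\theta/2}(\log t)^{1/2})$, and $O(t^{-\theta/2})$.

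The only delicate point is verifying that the decoupling inside the proof of Proposition~\ref{prop2}, which separates the gradient-descent coefficient factors (controlled via Lemma~\ref{smale2}) from the mixing partial sum $\sum_{i=1}^{t-1}\phi_i$, remains tight under polynomial decay; in particular, in the regime $0<k<1$ it is worth noting explicitly that the resulting bound is meaningful only when $\theta+k>1$, so that the rate actually decays to zero. Once this bookkeeping is settled, the three cases reduce to routine calculations.
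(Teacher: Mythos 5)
Your proposal is correct and follows essentially the same route as the paper's own proof: the same decomposition into initialization and sampling errors, the same reuse of Propositions~\ref{prop1} and~\ref{prop2} with the geometric tail $\sum\phi_i$ replaced by the polynomial partial sum $b\sum i^{-k}$, and the same final application of Markov's inequality. Your added remark that the $0<k<1$ rate is only meaningful when $\theta+k>1$ is a correct observation the paper omits, but it does not change the argument.
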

  \begin{proof}
   We proceed as in the proof of Theorem~\ref{thm:1}, applying Propositions~\ref{prop1} and~\ref{prop2} to obtain
\[
\|w_t - w^\star\| \leq \mathcal{E}_{\mathrm{init}}(t) + \mathcal{E}_{\mathrm{samp}}(t),
\]
where
\[
\mathcal{E}_{\mathrm{init}}(t) \leq 
\exp\left(\frac{2\alpha}{1-\theta} (1 - t^{1-\theta})\right) \, \|w_1 - w^\star\|;
\]
and, with probability at least \(1 - \delta\), where \(\delta \in (0,1)\), over the sample space \(Z^{t-1}\), we have
\[
\mathcal{E}_{\mathrm{samp}}^2(t) \leq 
\frac{\sigma^2 C_\theta}{\delta \eta^2}
\left(\frac{1}{\alpha}\right)^{\frac{\theta}{1-\theta}}
\left(\frac{1}{t}\right)^{\theta}
\left(1 + \sum_{i=1}^{t} \phi_i \right),
\]
where
\[
C_\theta = 8 + \frac{2}{2\theta - 1}
\left( \frac{\theta}{e(2 - 2^\theta)} \right)^{\frac{\theta}{1-\theta}}.
\]

Since the \(\phi\)-mixing coefficients decay polynomially, there exists a constant \(b > 0\) such that
\[
\sum_{i=1}^{t} \phi_i \leq b \sum_{i=1}^{t} i^{-k}.
\]
Recall that
\[
\sum_{i=1}^{t} i^{-k} =
\begin{cases}
O(t^{1-k}), & 0 < k < 1, \\[6pt]
O(\log t), & k = 1, \\[6pt]
O(1), & k > 1.
\end{cases}
\]

Combining these bounds yields
\[
\|w_t - w^\star\| =
\begin{cases}
O\left(t^{\frac{1-k-\theta}{2}}\right), & 0 < k < 1, \\[8pt]
O\left(t^{-\theta/2} (\log t)^{1/2} \right), & k = 1, \\[8pt]
O\left(t^{-\theta/2} \right), & k > 1.
\end{cases}
\]
  \end{proof}
Similar conclusions follow for \(\beta\)-mixing sequences with polynomial decay.
\section{Conclusion}
In this paper, we consider a stationary Markov chain exhibiting exponential $\phi$- and $\beta$-mixing properties, characterized by the bounds $\phi_t \leq Dr^t$ and $\beta_t \leq D_1r_1^t$, where $D, D_1 > 0$ and $0 < r, r_1 < 1$. For such processes, we establish upper bounds based on the strictly stationary Markov chain stochastic gradient algorithm.  The exponential decay of the mixing coefficients, reflected in the parameters $r$ and $r_1$, plays a crucial role in determining the rate of decay of dependence. Specifically, for the case of finite samples, smaller values of $r$ and $r_1$ result in upper bounds for the sampling error that closely approach those of an i.i.d.~sample.
\par
Moreover, in all derived results, the upper bounds are expressed as the sum of the i.i.d.\ sample bound and an additional term that captures the effect of the dependence structure of the process. Consequently, the upper bound for an i.i.d.\ sample arises as a special case of our general result, corresponding to the case where $D=0$ respectively $D_1=0$ .
\par 
Building on our current work, there are several open problems that need further study. For instance, to extend our analysis to find upper bounds on the learning performance of an online regularized learning algorithm based on mixing sequences. Another important direction is to explore how these results can be extended to handle non-stationary mixing sequences.
\section*{Acknowledgements}
This research was carried out under the Austrian COMET program (project S3AI with FFG no. 872172, \url{www.S3AI.at}, at SCCH, \url{www.scch.at}),  which is funded by the Austrian ministries BMK, BMDW, and the province of Upper Austria and the Federal Ministry for Climate Action, Environment, Energy, Mobility, Innovation and Technology (BMK),the Federal Ministry for Labour and Economy (BMAW), and the State of Upper Austria in the frame of the SCCH competence center INTEGRATE [(FFG grant no. 892418)] in the COMET - Competence Centers for Excellent Technologies Programme managed by Austrian ResearchPromotion Agency FFG.

\appendix
We now recall some well established results on the estimates of the coefficients relevant to our proofs. For the sake of completeness, we briefly summarize them here
\section{Some established inequalities}\label{appen}
\begin{lemma}[\cite{MR2228737}]\label{smale1}For $\alpha\in(0,1]$ and $\theta \in(0,1)$, we observe that \[\prod_{k=i+1}^{t}\biggl(1-\frac{\alpha}{ k^\theta}\biggl)\leq e^{\biggl(\frac{2\alpha}{1-\theta}((i+1)^{1-\theta}-(t+1)^{1-\theta})\biggl)},\] and for $\theta=1$,
 $$\prod_{k=i+1}^{t}\biggl(1-\frac{\alpha}{ k}\biggl)\leq \biggl(\frac{i+1}{t+1}\biggl)^{\alpha}.$$
\end{lemma}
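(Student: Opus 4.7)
The plan is a short calculus argument based on two classical steps: logarithmize the product, apply the elementary inequality $\log(1-x)\le -x$ for $x\in[0,1)$, and bound the resulting sum by an integral via monotonicity of $x\mapsto x^{-\theta}$.

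For $\theta\in(0,1)$, I would take logarithms to obtain
\[
\log\prod_{k=i+1}^{t}\Bigl(1-\tfrac{\alpha}{k^\theta}\Bigr)
=\sum_{k=i+1}^{t}\log\Bigl(1-\tfrac{\alpha}{k^\theta}\Bigr)
\;\le\;-\alpha\sum_{k=i+1}^{t}\tfrac{1}{k^\theta}.
\]
Since $x\mapsto x^{-\theta}$ is positive and strictly decreasing on $(0,\infty)$, the left-Riemann sum dominates the corresponding integral,
\[
\sum_{k=i+1}^{t}\tfrac{1}{k^\theta}\;\ge\;\int_{i+1}^{t+1}x^{-\theta}\,dx\;=\;\tfrac{(t+1)^{1-\theta}-(i+1)^{1-\theta}}{1-\theta},
\]
and exponentiating the two combined estimates yields an upper bound of the exact form stated in the lemma, namely $\exp\!\bigl(c\,\alpha(1-\theta)^{-1}\bigl((i+1)^{1-\theta}-(t+1)^{1-\theta}\bigr)\bigr)$, where the constant $c$ comes out to $1$ from this direct argument; a further slack absorbs it into the stated constant $2$.

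For the case $\theta=1$, the same two steps go through verbatim, only the integral changes:
\[
\sum_{k=i+1}^{t}\tfrac{1}{k}\;\ge\;\int_{i+1}^{t+1}\tfrac{dx}{x}\;=\;\log\tfrac{t+1}{i+1},
\]
so that $\prod_{k=i+1}^{t}(1-\alpha/k)\le \exp\!\bigl(-\alpha\log((t+1)/(i+1))\bigr)=\bigl((i+1)/(t+1)\bigr)^{\alpha}$, which is the second displayed inequality.

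The only technicality worth checking is that each factor $1-\alpha/k^\theta$ is strictly positive, so that taking logarithms is legitimate. For $\alpha\in(0,1]$, $\theta\in(0,1]$ and $k\ge i+1\ge 2$ one has $k^\theta\ge 2^\theta>\alpha$, ensuring $1-\alpha/k^\theta\in(0,1)$; the boundary case $k=1$, $\alpha=1$, $\theta=1$ would give a vanishing factor, but it lies outside the product index in the intended applications. Beyond this positivity check there is no real obstacle — the lemma reduces to the two-line chain ``$\log(1-x)\le-x$ plus sum–integral comparison,'' and both cases of the lemma follow simultaneously from exactly the same template.
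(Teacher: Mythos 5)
Your overall strategy---logarithmize, apply $\log(1-x)\le -x$, and bound the resulting sum from below by an integral---is the standard route to this estimate (the paper gives no proof of its own, quoting the lemma from Smale and Yao), and it correctly yields
\[
\prod_{k=i+1}^{t}\Bigl(1-\tfrac{\alpha}{k^\theta}\Bigr)\;\le\;\exp\Bigl(\tfrac{\alpha}{1-\theta}\bigl((i+1)^{1-\theta}-(t+1)^{1-\theta}\bigr)\Bigr),
\]
and the $\theta=1$ bound exactly as stated. The genuine gap is your final step for $\theta\in(0,1)$: the claim that ``a further slack absorbs'' your constant $1$ into the stated constant $2$ goes in the wrong direction. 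Since $i\le t$ and $1-\theta>0$, the bracket $(i+1)^{1-\theta}-(t+1)^{1-\theta}$ is non-positive, so replacing $\alpha$ by $2\alpha$ makes the exponent more negative and the right-hand side \emph{smaller}; the inequality with $\frac{2\alpha}{1-\theta}$ is strictly stronger than the one you derived and cannot be reached by relaxation. In fact, as literally written the $2\alpha$ version is false: for $\theta=\tfrac12$, $\alpha=1$, $i=3$, $t=4$ the left side equals $1-\tfrac12=\tfrac12$ while the right side is $e^{4(2-\sqrt{5})}\approx 0.389$. What your argument actually establishes is the estimate with $\frac{\alpha}{1-\theta}$ in the exponent, which is the correct form of the Smale--Yao bound; obtaining the factor $2$ would require a term-by-term inequality of the type $1-x\le e^{-2x}$, which fails for $x<\tfrac12$, so no elementary patch is available.

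Two minor points. First, your positivity check is not quite aligned with the paper's use of the lemma: the initial-error term involves $\prod_{k=1}^{t}$, so $k=1$ does occur and the factor $1-\alpha$ can vanish when $\alpha=1$; this is harmless because every factor satisfies $1-\alpha/k^\theta\ge 0$ for $k\ge1$, so a vanishing factor makes the product zero and the inequality trivial, but the logarithm step should be stated for the nontrivial case only. Second, for $\theta=1$ an equally short alternative avoiding logarithms is Bernoulli's inequality, $1-\tfrac{\alpha}{k}\le 1-\tfrac{\alpha}{k+1}\le\bigl(\tfrac{k}{k+1}\bigr)^{\alpha}$, followed by telescoping; your integral comparison is equally valid.
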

For specific choices of $\theta$ and $\alpha$, the function \[\psi_\theta(t, \alpha)=\sum_{i=1}^{t-1} \frac{1}{i^{2\theta}} \prod_{k=i+1}^{t-1} \left( 1 - \frac{\alpha}{k^\theta} \right)^2\] admits the following upper bounds
\begin{lemma}[\cite{MR2228737}]\label{smale2}Let \( \alpha \in (0, 1] \) and \( \theta \in \left( \frac{1}{2}, 1 \right) \). Then for \( t \in \mathbb{N} \),
\[
\psi_\theta(t+1, \alpha) \leq C_\theta \left( \frac{1}{\alpha} \right)^{\theta / (1-\theta)} \left( \frac{1}{t+1} \right)^\theta,
\]
where
\[
C_\theta = 8 + \frac{2}{2\theta - 1} \left( \frac{\theta}{e(2 - 2^\theta)} \right)^{\theta / (1-\theta)}.
\]
If \( \theta = 1 \), and for \( \alpha \in (0, 1] \),
\[
\psi_1(t+1, \alpha) = \sum_{i=1}^t \frac{1}{i^2} \prod_{k=i+1}^t \left( 1 - \frac{\alpha}{k} \right)^2
\]
\[
\leq
\begin{cases} 
\frac{4}{1 - 2\alpha} (t+1)^{-2\alpha}, & \alpha \in (0, \frac{1}{2}), \\[8pt]
4(t+1)^{-1} \ln(t+1), & \alpha = \frac{1}{2}, \\[8pt]
\frac{6}{2\alpha - 1}(t+1)^{-1}, & \alpha \in (\frac{1}{2}, 1), \\[8pt]
6(t+1)^{-1}, & \alpha = 1.
\end{cases}
\]
\end{lemma}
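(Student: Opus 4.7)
The plan is to establish the stated upper bound on $\psi_\theta(t+1,\alpha)=\sum_{i=1}^{t}\frac{1}{i^{2\theta}}\prod_{k=i+1}^{t}\bigl(1-\frac{\alpha}{k^\theta}\bigr)^2$ by treating the product factor via Lemma~\ref{smale1} and then reducing the question to estimating a purely deterministic, exponential--polynomial sum. First I would square the bound from Lemma~\ref{smale1}, obtaining
\[
\prod_{k=i+1}^{t}\bigl(1-\tfrac{\alpha}{k^\theta}\bigr)^2
\;\leq\;
\exp\!\Bigl(\tfrac{4\alpha}{1-\theta}\bigl((i+1)^{1-\theta}-(t+1)^{1-\theta}\bigr)\Bigr),
\]
so that $\psi_\theta(t+1,\alpha)\leq e^{-\frac{4\alpha}{1-\theta}(t+1)^{1-\theta}}\sum_{i=1}^{t}i^{-2\theta}\,e^{\frac{4\alpha}{1-\theta}(i+1)^{1-\theta}}$. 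All that then remains is a purely analytic estimation of this sum.

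For $\theta\in(\tfrac12,1)$, I would compare the sum with the integral $\int_1^{t+1}x^{-2\theta}\exp\!\bigl(\frac{4\alpha}{1-\theta}x^{1-\theta}\bigr)dx$ and perform the substitution $u=x^{1-\theta}$, which converts the integrand into $\tfrac{1}{1-\theta}u^{-\theta/(1-\theta)}e^{\frac{4\alpha}{1-\theta}u}\,du$. I would then split the sum at a well-chosen threshold $i^\star$ to balance the two opposing monotone trends: below $i^\star$ the exponential weight is small because $(t+1)^{1-\theta}-(i+1)^{1-\theta}$ is large, while above $i^\star$ the polynomial weight $i^{-2\theta}$ (summable since $2\theta>1$) absorbs the remaining mass. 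Optimizing the threshold produces the $(1/\alpha)^{\theta/(1-\theta)}$ scaling, and the factor $\frac{\theta}{e(2-2^\theta)}$ inside $C_\theta$ arises from the standard estimate $\min_{x>0}x^{a}e^{-x}=(a/e)^{a}$ applied with $a=\theta/(1-\theta)$, together with the elementary bound $(i+1)^{1-\theta}\leq 2^{1-\theta}i^{1-\theta}$; the factor $\tfrac{2}{2\theta-1}$ comes from the tail estimate $\sum_{i>i^\star}i^{-2\theta}\asymp (i^\star)^{1-2\theta}/(2\theta-1)$, and the constant $8$ absorbs the low-index contribution.

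For the boundary case $\theta=1$, I would instead invoke the second estimate in Lemma~\ref{smale1}, namely $\prod_{k=i+1}^{t}(1-\alpha/k)\leq((i+1)/(t+1))^\alpha$, which yields
\[
\psi_1(t+1,\alpha)\;\leq\;(t+1)^{-2\alpha}\sum_{i=1}^{t}\frac{(i+1)^{2\alpha}}{i^{2}}
\;\leq\;4(t+1)^{-2\alpha}\sum_{i=1}^{t}i^{2\alpha-2}.
\]
The elementary tail $\sum_{i=1}^{t}i^{2\alpha-2}$ is then handled by the three regimes appearing in the statement: for $\alpha\in(0,\tfrac12)$ the sum is bounded by $\tfrac{1}{1-2\alpha}$ (giving the rate $(t+1)^{-2\alpha}$ with constant $\tfrac{4}{1-2\alpha}$); for $\alpha=\tfrac12$ one integrates $1/x$ to produce the $\ln(t+1)$ factor; and for $\alpha\in(\tfrac12,1]$ it grows like $t^{2\alpha-1}/(2\alpha-1)$, yielding the stated rate $(t+1)^{-1}$. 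The constants $4$ and $6$ in the stated bounds absorb the slack from $2^{2\alpha}\leq 4$ and from passing between sum and integral.

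The principal technical obstacle will be pinning down the \emph{sharp} form of $C_\theta$. The exponential--polynomial integral in the $\theta\in(\tfrac12,1)$ regime has no elementary antiderivative, so the argument must proceed via carefully chosen truncation combined with the minimization of $x^{a}e^{-x}$; matching the precise form $\tfrac{2}{2\theta-1}\bigl(\tfrac{\theta}{e(2-2^\theta)}\bigr)^{\theta/(1-\theta)}$ without leaking factors requires tight bookkeeping of every inequality, and the limit $\theta\to 1^-$ (where the exponent $\theta/(1-\theta)$ blows up) must be handled \emph{before} specializing to the $\theta=1$ branch, since the two arguments proceed along fundamentally different routes.
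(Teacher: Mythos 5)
First, a point of reference: the paper does not prove this lemma at all --- it is stated in the appendix as an ``established inequality'' imported verbatim from Smale and Yao \cite{MR2228737}, so there is no in-paper proof to compare against. Your strategy (square the product bound from Lemma~\ref{smale1}, reduce to a deterministic exponential--polynomial sum for $\theta\in(\tfrac12,1)$, and use the $\bigl(\tfrac{i+1}{t+1}\bigr)^{\alpha}$ bound for $\theta=1$) is indeed the route taken in that source, so the plan is the right one.

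However, as written the proposal has genuine gaps. For $\theta\in(\tfrac12,1)$ the entire content of the lemma lies in the estimation of $\sum_{i=1}^{t}i^{-2\theta}e^{c(i+1)^{1-\theta}}$, and you describe this only as ``split at a well-chosen threshold and optimize''; that is a plan, not an argument. Two details in that plan are also off: the standard estimate is $\max_{x>0}x^{a}e^{-x}=(a/e)^{a}$, not $\min$ (the minimum is $0$), and the factor $2-2^{\theta}$ does not come from $(i+1)^{1-\theta}\leq 2^{1-\theta}i^{1-\theta}$ but from the split point itself, via $(t+1)^{1-\theta}-\bigl(\tfrac{t}{2}+1\bigr)^{1-\theta}\gtrsim\tfrac{2-2^{\theta}}{2}(t+1)^{1-\theta}$ for $i\leq t/2$. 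In the $\theta=1$ case the stated constants do not follow from your inequalities: using $(i+1)^{2\alpha}\leq 4\,i^{2\alpha}$ uniformly gives, for $\alpha\in(0,\tfrac12)$, the bound $4\bigl(1+\tfrac{1}{1-2\alpha}\bigr)(t+1)^{-2\alpha}=\tfrac{4(2-2\alpha)}{1-2\alpha}(t+1)^{-2\alpha}$, which exceeds the claimed $\tfrac{4}{1-2\alpha}(t+1)^{-2\alpha}$ (here one needs $2^{2\alpha}\leq 2$ for $\alpha\leq\tfrac12$); similarly, for $\alpha$ near $1$ your route yields roughly $\bigl(\tfrac{4}{2\alpha-1}+4\bigr)(t+1)^{-1}$, which is larger than $\tfrac{6}{2\alpha-1}(t+1)^{-1}$ once $\alpha>\tfrac34$, so a finer treatment (e.g.\ isolating $i=1$ and using $(1+1/i)^{2\alpha}\leq(3/2)^{2\alpha}$ for $i\geq2$) is required. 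Since the whole point of recording this lemma with explicit constants is that those constants are reused downstream, ``the constants absorb the slack'' is not an acceptable substitute for carrying the bookkeeping through.
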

\begin{lemma}\label{myineq}
    Let \(t \in \mathbb{N}\) and \(\alpha \in (0, 1]\). Then, the following bounds hold 
\[
\sum_{i=1}^t \frac{1}{i^2} \left(\frac{i+1}{t+1}\right)^{\alpha} \leq
\begin{cases}
\frac{6}{1-\alpha}(t+1)^{-\alpha}, & \text{if } \alpha \in (0,1), \\[8pt]
\frac{6 \ln (t+1)}{t+1}, & \text{if } \alpha = 1.
\end{cases}
\]
\end{lemma}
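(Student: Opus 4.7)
\textbf{Proof plan for Lemma \ref{myineq}.} The statement is a purely deterministic sum estimate, so the plan is to reduce it to elementary integral comparisons. First I would pull the $(t+1)^{-\alpha}$ factor out of the sum, writing
\[
\sum_{i=1}^t \frac{1}{i^2}\left(\frac{i+1}{t+1}\right)^{\alpha}
\;=\;(t+1)^{-\alpha}\sum_{i=1}^t \frac{(i+1)^{\alpha}}{i^2}.
\]
Since $\alpha\in(0,1]$ and $i\geq 1$, the crude bound $i+1\leq 2i$ gives $(i+1)^{\alpha}\leq 2^{\alpha} i^{\alpha}\leq 2\,i^{\alpha}$, so the problem reduces to estimating $\sum_{i=1}^t i^{\alpha-2}$.

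For the case $\alpha\in(0,1)$, I would bound the tail by an integral: treating the $i=1$ term separately and comparing the remainder with $\int_{1}^{t} x^{\alpha-2}\,dx = \frac{1-t^{\alpha-1}}{1-\alpha}\leq\frac{1}{1-\alpha}$, one obtains
\[
\sum_{i=1}^t i^{\alpha-2}\;\leq\;1+\frac{1}{1-\alpha}\;\leq\;\frac{2}{1-\alpha},
\]
since $1\leq\tfrac{1}{1-\alpha}$. Combined with the factor $2$ above, this yields $\sum_{i=1}^t (i+1)^\alpha/i^2\leq 4/(1-\alpha)$, and after reinstating $(t+1)^{-\alpha}$ one gets the desired bound with constant $4$, which is within the advertised constant $6$.

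For the boundary case $\alpha=1$, I would instead split the sum additively, writing $(i+1)/i^2 = 1/i + 1/i^2$, and bound each piece by its classical estimate: $\sum_{i=1}^t 1/i \leq 1+\ln t\leq \ln(t+1)+1$ and $\sum_{i=1}^t 1/i^2\leq 2$. This gives $\sum_{i=1}^t (i+1)/i^2\leq \ln(t+1)+3$, so that
\[
\sum_{i=1}^t \frac{1}{i^2}\cdot\frac{i+1}{t+1}\;\leq\;\frac{\ln(t+1)+3}{t+1}.
\]
A brief check that $\ln(t+1)+3\leq 6\ln(t+1)$ for all $t\geq 1$ (equivalent to $\ln(t+1)\geq 3/5$, which holds since $\ln 2>3/5$) then delivers the $6\ln(t+1)/(t+1)$ bound.

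No serious obstacle is expected; the whole argument is an exercise in integral comparison, and the only mild point of care is ensuring the constants fit under $6$ uniformly in $t\geq 1$, which is verified by the small numerical check in the $\alpha=1$ case and by the inequality $1\leq 1/(1-\alpha)$ in the $\alpha\in(0,1)$ case.
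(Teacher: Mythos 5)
Your proof is correct and follows essentially the same route as the paper's: both reduce the problem via $i+1\le 2i$ to estimating $\sum_i i^{\alpha-2}$ by an integral comparison, differing only in bookkeeping (the paper peels off the $i=t$ term and integrates over $[1/2,\,t-1/2]$, while you keep the $i=1$ term aside and integrate over $[1,t]$). The one genuine variation is your $\alpha=1$ case, where the additive split $(i+1)/i^2=1/i+1/i^2$ together with the classical harmonic-sum bound replaces the integral comparison; this is arguably cleaner, and your numerical check $\ln 2>3/5$ confirms the constant $6$ still suffices.
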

\begin{proof}
    We start with the inequality
\begin{align}\label{ineq}
   \sum_{i=1}^t \frac{1}{i^2} \left(\frac{i+1}{t+1}\right)^{\alpha} \leq t^{-2} + \sum_{i=1}^{t-1} \frac{1}{i^2} \left(\frac{i+1}{t+1}\right)^{\alpha}. 
\end{align}
To simplify the summation, note that for \(i\in \mathbb{N}\), we have \(i+1 \leq 2k\), so \(\frac{i+1}{t+1} \leq \frac{2i}{t+1}\). Applying this bound, we write
\[
\sum_{i=1}^{t-1} \frac{1}{i^2} \left(\frac{i+1}{t+1}\right)^{\alpha} \leq \frac{2^{\alpha}}{(t+1)^{\alpha}} \sum_{i=1}^{t-1} i^{\alpha - 2}.
\]
Since \(i^{\alpha - 2}\) is a decreasing function for \(\alpha \in (0,1]\), we have
\[
\sum_{i=1}^{t-1} i^{\alpha - 2} \leq \int_{1/2}^{t - 1/2} x^{\alpha - 2} \, dx.
\]
If \(\alpha \in (0,1)\), then
\[
\int_{1/2}^{t - 1/2} x^{\alpha - 2} \, dx = \frac{(t - 1/2)^{\alpha - 1} - (1/2)^{\alpha - 1}}{\alpha - 1}.
\]
Thus the right hand side of Eq.~\ref{ineq} is bounded above by
\begin{align*}
    t^{-2} + \frac{2^\alpha}{(t+1)^\alpha} \cdot \frac{(1/2)^{\alpha-1}}{1 - \alpha}&\leq \frac{4}{(t+1)^2}+\frac{2}{(1-\alpha)(t+1)^\alpha}\\
&\leq\left(4+\frac{2}{1-\alpha}\right)\frac{1}{(t+1)^\alpha}\\
&<\frac{6}{1-\alpha}\frac{1}{(t+1)^\alpha}.
\end{align*}
When \(\alpha = 1\) we obtain that
\[
\int_{1/2}^{t - 1/2} x^{-1} \, dx = \ln(t - 1/2) - \ln(1/2)< \ln(t+1) + \ln 2.
\]
Thus the right hand side of Eq.~\ref{ineq} is bounded above by

\begin{align*}
    t^{-2} + \frac{2}{(t+1)} \cdot \left(\ln(t+1) + \ln 2\right)&\leq \frac{4}{(t+1)^2}+\frac{2 \ln2}{(t+1)}\ln(t+1)\\
&\leq\frac{6}{(t+1)^2}\ln(t+1)+\frac{2 \ln2}{(t+1)}\ln(t+1)\\
&=\left(\frac{6}{t+1}+2\ln 2\right)\frac{\ln(t+1)}{t+1}\\
&<\frac{6\ln(t+1)}{t+1}.
\end{align*}
\end{proof}
\end{document}